\providecommand{\tabularnewline}{\\}
\theoremstyle{plain}
\newtheorem*{thm*}{\protect\theoremname}
\theoremstyle{definition}
\newtheorem{defn}{\protect\definitionname}
\theoremstyle{remark}
\newtheorem*{rem*}{\protect\remarkname}
\theoremstyle{plain}
\newtheorem{thm}{\protect\theoremname}
\theoremstyle{plain}
\newtheorem{cor}{\protect\corollaryname}
\theoremstyle{plain}
\newtheorem{lem}{\protect\lemmaname}
\theoremstyle{plain}
\newtheorem{prop}{\protect\propositionname}
\providecommand{\corollaryname}{Corollary}
\providecommand{\definitionname}{Definition}
\providecommand{\lemmaname}{Lemma}
\providecommand{\remarkname}{Remark}
\providecommand{\theoremname}{Theorem}
\providecommand{\propositionname}{Proposition}
\DeclareMathOperator*{\argmin}{argmin}
\begin{document}
%

\title{A Neighbor-Searching Discrepancy-based Drift Detection Scheme for Learning Evolving Data}

%
%

\author{Feng~Gu,
       Jie~Lu\IEEEauthorrefmark{1},~\IEEEmembership{Fellow,~IEEE,}
        Zhen~Fang,
        Kun~Wang,
        and Guangquan~Zhang
        

\thanks{Feng Gu, Jie Lu, Zhen Fang, Kun Wang, and Guangquan Zhang are with Australia Artificial Intelligence Institute (AAII), Faculty of Engineering and Information Technology, University of Technology Sydney, P.O. Box 123, Broadway NSW, Australia.
(email: geofgu@gmail.com, jie.lu@uts.edu.au, zhen.fang@uts.edu.au, kun.wang@uts.edu.au, guangquzn.zhang@uts.edu.au)}
\thanks{\IEEEauthorrefmark{1} Corresponding author.}


}

\IEEEtitleabstractindextext{%
\begin{abstract}
Uncertain changes in data streams present challenges for machine learning models to dynamically adapt and uphold performance in real-time. Particularly, classification boundary change, also known as real concept drift, is the major cause of classification performance deterioration. However, accurately detecting real concept drift remains challenging because the theoretical foundations of existing drift detection methods - two-sample distribution tests and monitoring classification error rate, both suffer from inherent limitations such as the inability to distinguish virtual drift (changes not affecting the classification boundary, will introduce unnecessary model maintenance), limited statistical power, or high computational cost. Furthermore, no existing detection method can provide information on the trend of the drift, which could be invaluable for model maintenance. This work presents a novel real concept drift detection method based on Neighbor-Searching Discrepancy, a new statistic that measures the classification boundary difference between two samples. The proposed method is able to detect real concept drift with high accuracy while ignoring virtual drift. It can also indicate the direction of the classification boundary change by identifying the invasion or retreat of a certain class, which is also an indicator of separability change between classes. A comprehensive evaluation of 11 experiments is conducted, including empirical verification of the proposed theory using artificial datasets, and experimental comparisons with commonly used drift handling methods on real-world datasets. The results show that the proposed theory is robust against a range of distributions and dimensions, and the drift detection method outperforms state-of-the-art alternative methods.
\end{abstract}

\begin{IEEEkeywords}
concept drift, stream data mining, drift detection, classification, supervised learning
\end{IEEEkeywords}}

\maketitle

\IEEEdisplaynontitleabstractindextext

\IEEEpeerreviewmaketitle

\section{Introduction}\label{sec:introduction}

In recent years, data stream learning under uncertainty has attracted considerable attention \cite{ Lu2019Learning}. In streaming data, the data distribution may change over time, that is concept drift, which includes four categories: sudden drift, incremental drift, gradual drift, and reoccurrence drift \cite{ Lu2019Learning}. The uncertain occurrence of concept drift challenges traditional machine learning methods to adapt promptly, resulting in decreased learning performance \cite{Liu2022Concept, liu2020concept}. That is, the input data is a continuous stream and arbitrary distribution change may occur at any time, resulting in an increased rate of prediction error \cite{hulten01minin.c5}. And this issue widely occurs in many real-world applications \cite{chiang01fault.i1,wei02turnin.c7}. Therefore, it is needed to identify and handle concept drift in time to help maintain the learning performance on evolving data \cite{Yu2022Continuous, zhou2023multi}.

A relatively large body of literature addresses the issue of learning with non-stationary data streams by developing learning algorithms that are responsive to concept drift. These algorithms can be categorized into three strategies: 1) detection and retraining algorithms \cite{bach08paired,alippi13just}, which retrain a new model using the latest data to replace the old model; 2) adaptive models \cite{gama03accur,yang12increm}, mostly decision tree-based methods, which partially update leaf nodes when their performance deteriorates; and 3) adaptive ensembles \cite{gomes17adapt,jiao2024incremental,Pratama2020online}, which dynamically add new base classifiers to adapt to new concepts. However, for all these methods to be effective, accurately detecting drift, either explicitly or implicitly, is a prerequisite. On one hand, an insensitive detector might cause a late or even missed model update, thus failing to adapt to drift; on the other hand, an oversensitive detector could introduce computational overhead or impair the model's generalization capability by frequently triggering unnecessary model updates.

Existing concept drift detection methods fall into two groups based on their theoretical foundations: monitoring learner outputs and two-sample distribution tests \cite{lu14cmp_c}. Methods in the first group, including \cite{yasum07quick_c41,li2009concept_c42}, treat the learning error of the base classifier output as a random variable from Bernoulli trials, so that binomial distribution can be used to describe the distribution of the error rate. A significant increase in the error rate, when exceeding a preset threshold, indicates concept drift. Such methods are computationally efficient and are able to detect real concept drift because they monitor prediction output directly. However, they fall short of being able to describe the drift, thus are only suitable for rebuilding a new model rather than updating an existing one.

Methods in the second group apply two-sample tests (either existing or newly developed) on the latest data and previous data to decide whether their distribution differs. For practical purposes, multivariate non-parametric tests are preferable because in real-world applications, input data is often multi-dimensional and knowledge about the data distribution is not available a priori. These methods may be statistical divergence-based \cite{dasu06kdqtree}\cite{yang2019novel}, distribution test-based \cite{alippi08just_i}\cite{ijcai2018yu}, kernel-based \cite{gretton12mmd}, entropy-based \cite{vorburger06entrop}, computational geometry-based \cite{liu11trian}, or instance-based \cite{biswas14nonpar,liu2018accumulating,lu14cmp_c,lu16edit}. These methods have several advantages over error-based detection methods - they are able to directly detect distribution difference without depending on classification models; they are be able to identify drift regions for partial model updates; and they can be applied directly to input data without true labels. A major limitation of these methods is that they cannot distinguish between real drift - classification boundary change and virtual drift - distribution changes not affecting classification result (e.g., distribution density changes or covariance shift).

To deal with these issues, we propose a novel instance-based real concept drift detection method, based on a new statistic - Neighbor-Searching Discrepancy (NSD), based on the theory of spatial statistics and nearest neighbor distribution. It is distribution-free (no prior distribution knowledge required), has an exact distribution (no re-sampling needed) and only detects real concept drift (ignoring virtual drift). The main contributions are shown below:

\begin{itemize}
\item A neighbor-searching discrepancy process has been developed and analyzed. A novel statistic is proposed with a detailed to measure the probability of an observed search volume difference for identifying neighboring points from two samples of the same distribution. And the process is distribution-free and unbiased, and does not require prior information about data distribution.

\item A real concept drift detection is designed to detect the classification gap change. It is computationally efficient since the drift confidence level can be calculated directly, instead of relying on re-sample estimation; it is also able to describe a classification boundary change as the invasion or retreat of a certain class and can measure its magnitude.

\item A detailed theoretical analysis is conducted. The concepts about neighbor searching and real concept drift detection have been clearly defined. And the detailed theoretical analysis about the neighbor-searching volume ratio is given to better support the proposed real concept drift detection method. 

\item The experiment on both synthetic and real-world data reflect the efficiency of the proposed real concept drift detection method. The proposed method can help maintain the learning performance, and can be directly applied to high dimensional data without losing effectiveness
\end{itemize}

To the best of our knowledge, it is the first attempt to use spatial statistics theory for concept drift detection. The novelty lies in the endeavor to extend spatial statistics theory, which concerns only geographical data, to higher dimensions. The proposed concepts and theorems around neighbor-searching discrepancy are mathematically general and could be applied to other problems, such as boundary detection. These results could also provide other researchers with a new perspective on the new statistically guaranteed properties of nearest neighbor-based methods.

This paper is organized as follows. Section \ref{sec:NeighborSearchDiscrepancy} proposes the theory related to Neighbor-Searching Discrepancy and discusses its properties. Section \ref{sec:DetectionMethod} develops the real concept drift detection method based on NSD. The Experiment results are given in Section \ref{sec:Experiment}. Section \ref{sec:Conclusion} concludes this study with a discussion of future work.

\section{A New Measure: Neighbor-Searching Discrepancy\label{sec:NeighborSearchDiscrepancy}}

This section proposes a novel statistic that measures the probability of an observed search volume difference for finding neighboring points from two samples of the same distribution.

\subsection{Preliminaries}

The basic assumption of this study is that data points, with an unknown distribution, can be regarded as generated from a binomial point process in multi-dimensional Euclidean space. Specifically, given a measurable subset $A\subseteq\mathbb{R}^{d}$, let $f$ be a density function on $A$ and $n\in\mathbb{N}$, a \emph{binomial point process (b.p.p.)} $\mathbf{X}=\{x_{1},x_{2},\ldots,x_{n}\}$ consists of $n$ independent and identically distributed (i.i.d.) points $x_{i}\in\mathbb{\mathbb{R}}^{d}$ with the common density $f$. A binomial point process has the following property:

Given any subset $S\subseteq A$, the number of points in $S$, denoted as the counting measure $N(S)$, has a binomial distribution,

\[
N(S)\sim\mathrm{binom}(n,p)
\]

where $p=\int_{S}f(x)\mathrm{d}x$.

Now denote the volume of $S$ as $V(S)$, then the \emph{intensity} of $S$, denoted as $\lambda(S)$, is the expected number of points per unit volume in $S$, thus $\lambda(S)=np/V(S)$. For a non-homogeneous b.p.p., $\lambda(S)$ may vary for different $S$. But for a homogeneous b.p.p., which means that the data points are uniformly distributed, $\lambda(S)$ is constant and can be denoted as $\lambda$.

The motivation for a new statistic is that, given two sets $X_{1,}X_{2}$ of sample points in $\mathbb{R}^{d}$, generated from an unknown distribution, let $S$ be a subset of a particular shape (e.g. a ball) containing $k_{1}$ point from $X_{1}$, if we observe $k_{2}$ points from $X_{2}$ that fall in $S$, we would like to know the probability of this happening under the hypothesis that $X_{1,}X_{2}$ are generated from the same distribution. For a general non-homogeneous b.p.p., a direct derivation of this statistic is difficult since $\lambda(S)$ can vary from place to place. In the following discussion we adopt an alternative path by first deriving the statistic for homogeneous b.p.p. and then showing that the result still holds for non-homogeneous b.p.p. and can be generalized to arbitrary shapes besides balls.

\subsection{Neighbor Searching}

A traditional searching for the k-nearest-neighbor of a given point $x$, in a set of sample points, is performed by ordering the sample points according to their distance to $x$, computed by a certain distance function. To discuss the property of the volume that is covered during this searching process, we need a more generalized definition for the k-nearest-neighbor searching.

\begin{defn}[Neighbor Searching]
 A \emph{ (neighbor) search} ${S}$ over $\mathbb{R}^d$ is a sequence of measurable sets $\{S_{i}\}$, called \emph{search steps}, with the following properties:

1) $S_{0}=\{x^{1},x^{2},\ldots,x^{m}\}$, where $x^i \in \mathbb{R}^d$, is the initial search step containing $m$ \emph{starting points};

2) $S_{i}\subset S_{j}$ if $i<j$, where $i,j\in\mathcal{I}$;

3) for any $i<j,i,j\in\mathcal{I}$, we have $V(S_{i})<V(S_{j})$;

4) $\sup_{i\in \mathcal{I}} V(S_i))=+ \infty$;

5) For any $v \in [0,+\infty)$, there exists $i \in \mathcal{I}$ such that $V(S_i)=v$;
\\
where $\mathcal{I}=[0,+\infty)$ is the index set.
\label{def:NeighborSearch}
\end{defn}








Based on this definition, we redefine the $k\mathrm{th}$ nearest neighbor as follows.

\begin{defn}[$k$th Nearest Neighbor]
  Given  a continuous distribution $P$, neighbor searching $S$ over $\mathbb{R}^d$ , and samples $X=\{x_1,...,x_n\}$ distributed by $P$, we say a set $S{(k)}\in S$ is the $k$th (nearest) neighbor if
  
  1) there exists an index $i\in \mathcal{I}$ such that $S{(k)}=S_i$;

  2) the number of samples in $S_i$ is $k$ ($\# X \bigcap S_i=k$);

  3) the number of samples in $S_j$ is smaller than $k$, where $j<i$ and $j\in \mathcal{I}$ ~($\# X \bigcap S_j<k$).
  
   The $k$th (nearest) neighbor-searching volume, denoted as $V(k) = V (S(k))$, is the volume of the $k$th nearest neighbor set $S(k)$.
\label{def:NearestNeighbor}
\end{defn}
  


  
\begin{rem*}
We can see that the traditional searching of neighbors for a point according to their distance is a special case of \emph{neighbor searching} where $S_{0}$ includes a single starting point and $S_{i}$ are concentric balls indexed by their radii, of which the 2D case is shown in Figure \ref{fig:NeighborSearch} (a). Also in Figure \ref{fig:NeighborSearch}, (b) and (c) demonstrate other possible shapes of the search; additionally, the shape of the search steps can also increase in an imbalanced manner as shown in (d) and (e); finally, (f) is the case of neighbor searching of multiple starting points. Notice that different neighbor searching does not necessarily yield the same result of nearest neighbors.
\end{rem*}

\begin{figure}
\subfloat[Sphere]
{\includegraphics[scale=0.4]{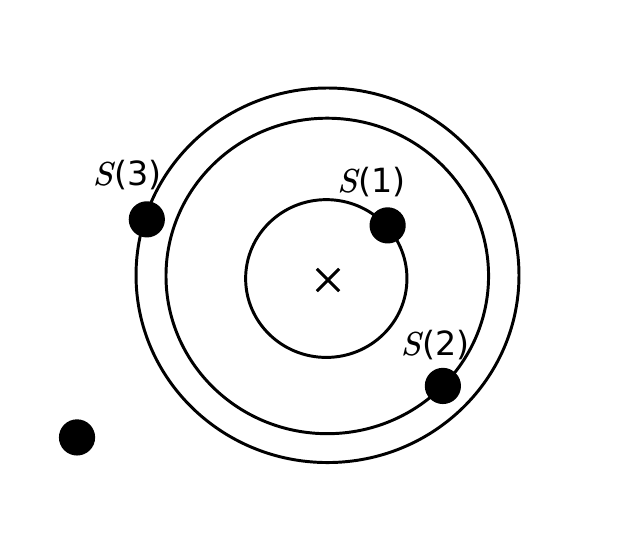}}
\hfill
\subfloat[Triangle]
{\includegraphics[scale=0.4]{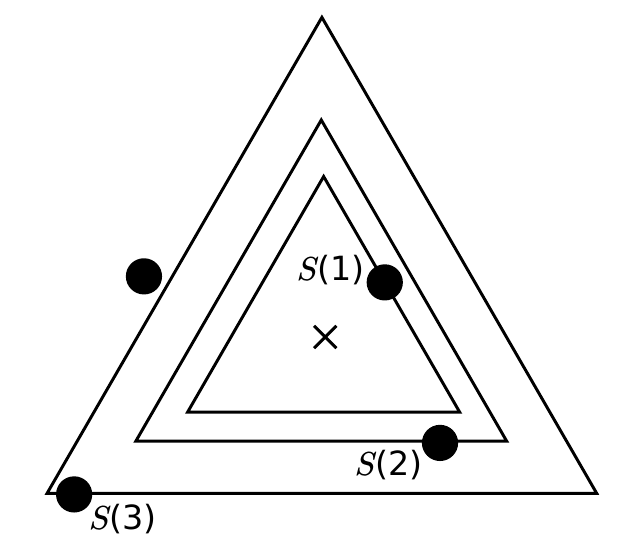}}

\subfloat[Irregular]
{\includegraphics[scale=0.4]{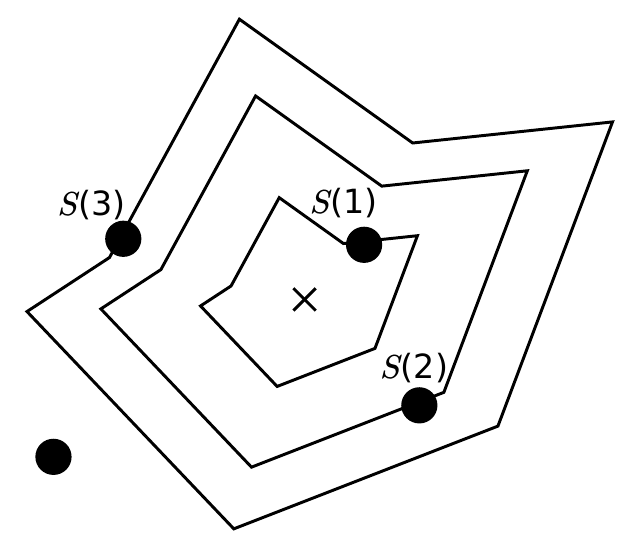}}
\hfill
\subfloat[Stepped]
{\includegraphics[scale=0.4]{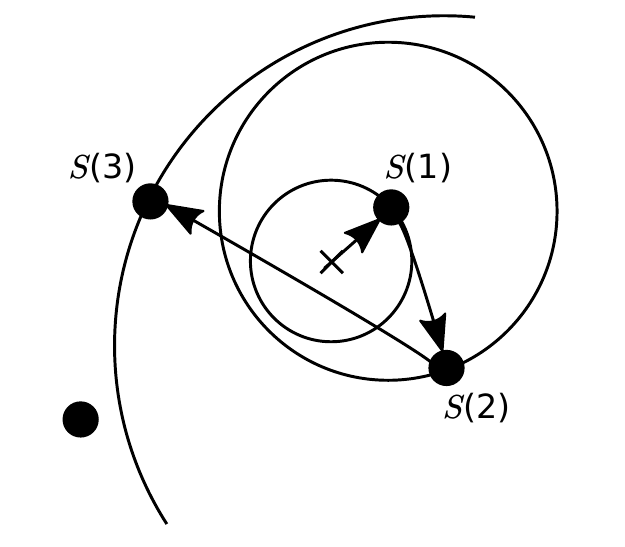}}

\subfloat[Imbalanced]
{\includegraphics[scale=0.4]{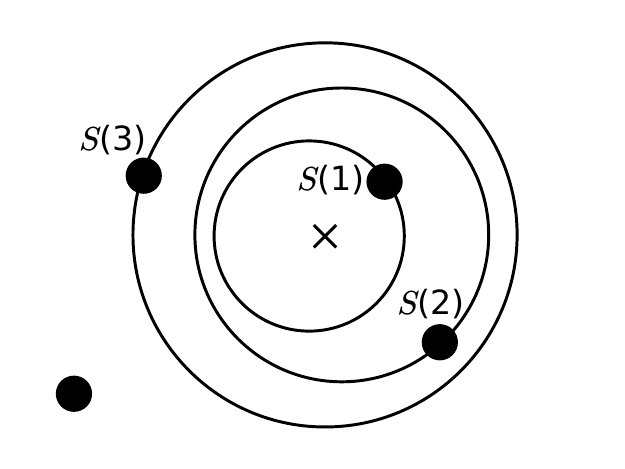}}
\hfill
\subfloat[Multiple starting points]
{\includegraphics[scale=0.4]{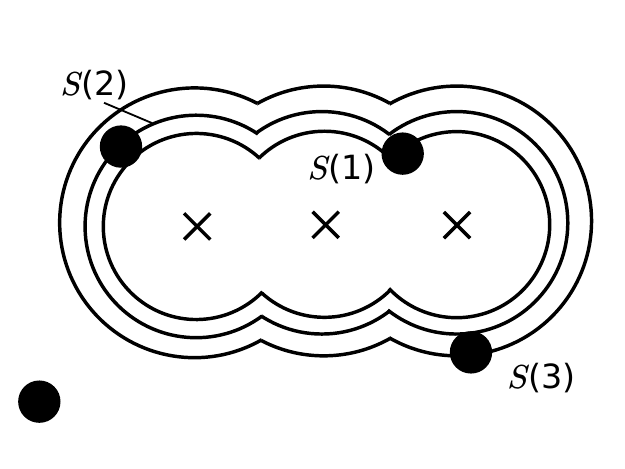}}
\caption{Examples of finding 3 nearest neighbors following neighbor searching of different manifold spaces.}
\label{fig:NeighborSearch}
\end{figure}

\begin{figure*}
\begin{centering}
\includegraphics[scale=0.22]{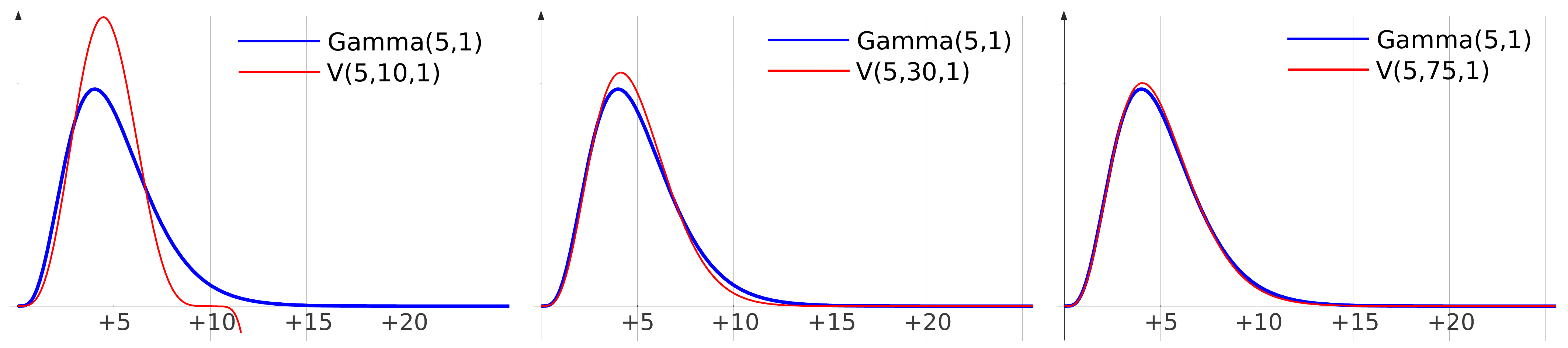}
\par\end{centering}
\caption{The PDF of $\mathcal{V}(k,n,\lambda)$ converges to that of $\mathrm{Gamma}(k,\lambda)$ as $n$ becomes greater than $k$.\label{fig:NSDamma}}
\end{figure*}

The distribution of neighbor-searching volume of concentric spheres in 2D, that is the area, has been investigated in spatial statistics \cite{ripley2005spatial}. The probability of finding the $k$th neighbor within a given circle of radius $t$ is given by
\begin{equation}
1-\sum_{j=0}^{k-1}\exp(-\lambda\pi t^{2})\frac{(\lambda\pi t^{2})^{j}}{j!}
\end{equation}
Our first step is to generalize a similar result to multiple dimensions. We know that if the sample points are generated from b.p.p., the distribution of the number of points falling in a certain region has a binomial distribution. Thus, for a homogeneous b.p.p in $\mathbb{R}^{d}$ with intensity $\lambda$, let $V(k)$ be the $k\mathrm{th}$ neighbor-searching volume of given neighbor searching, and consider $V(k)$ as a random variable, then with a given volume value $v \in [0,+\infty)$, we have
\begin{equation}
\mathbb{P}(V(k)>v)=\sum_{j=0}^{k-1}\binom{n}{j}\Big(\frac{\lambda v}{n}\Big)^{j}\Big(1-\frac{\lambda v}{n}\Big)^{n-j}\label{eq:VolumeDistribution}
\end{equation}
\begin{proof}
According to property 5) of  Definition 1, we can find a set $S\in \{S_i\}$, such that $V(S)=v$, then

\[
P(V(k)>v)=P(N(S)\leq k-1)=\sum_{j=0}^{k-1}P(N(S)=j)
\]

According to the property of b.p.p., for each integer $j\in[0,k-1]$,

\[
P(N(S)=j)=\binom{n}{j}\Big(p\Big)^{j}\Big(1-p\Big)^{n-j}
\]

where $p=\lambda v/n$.
\end{proof}
For random variable $V(k)$, we can directly obtain its cumulative density function (CDF) as

\begin{equation}
\mathbb{P}(V(k)\le v)=1-\sum_{j=0}^{k-1}\binom{n}{j}\Big(\frac{\lambda v}{n}\Big)^{j}\Big(1-\frac{\lambda v}{n}\Big)^{n-j}\label{eq:VolumeCDF},
\end{equation}
where $v \in [0,+\infty)$.
Then we derive the probability density function (PDF),

\begin{equation}
\mathbb{P}'(V(k)\le v)=\mathrm{binom}(k,n,\frac{\lambda v}{n})\cdot\frac{k}{v}\label{eq:VolumePDFbinom}
\end{equation}

For convenience, we define this distribution as follows.

\begin{defn}[Neighbor-searching Volume Distribution]
For a homogeneous b.p.p in $\mathbb{R}^{d}$ with intensity $\lambda$, let random variable $V(k)$ be the $k\mathrm{th}$ neighbor-searching volume in the given neighbor searching, then $V(k)$ follows \emph{neighbor-searching volume distribution}, denoted as $\mathcal{V}(k,n,\lambda)$.
\label{def:VolumeDistribution}
\end{defn}

\subsection{Neighbor-Searching Volume Ratio}

The previous section defined neighbor-searching volume distribution based on one sample of points. In this section, we expand this by investigating the relations between two random variables of neighbor-searching volume distribution based on two samples.

To simplify the calculation, we begin with the following lemma.

\begin{lem}
For a homogeneous b.p.p, given a random variable $V(k)\sim\mathcal{V}(k,n,\lambda)$, if
n is much greater than k, denoted as $n\gg k$, then $V(k)\sim\mathrm{Gamma}(k,\lambda)$.\label{lem:Gamma}
\end{lem}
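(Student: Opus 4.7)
The plan is to work directly with the closed-form PDF in Equation \eqref{eq:VolumePDFbinom} and show that, as $n$ grows large relative to $k$, this density converges pointwise to the Gamma$(k,\lambda)$ density. This is essentially the same mechanism by which a binomial distribution converges to a Poisson distribution, only applied one level up at the density level.

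First I would expand $\mathrm{binom}(k,n,\lambda v/n)\cdot \tfrac{k}{v}$ and regroup, writing
\[
f_{V(k)}(v) \;=\; \frac{n!}{(n-k)!\, n^{k}}\cdot \frac{\lambda^{k} v^{k-1}}{(k-1)!}\cdot \Bigl(1-\frac{\lambda v}{n}\Bigr)^{n-k}.
\]
This isolates three factors, each of which has a clean limit as $n\to\infty$ with $k$ fixed: the falling-factorial ratio $\tfrac{n!}{(n-k)!\,n^{k}}=\prod_{j=0}^{k-1}\bigl(1-\tfrac{j}{n}\bigr)$ tends to $1$; the exponential-type factor $\bigl(1-\tfrac{\lambda v}{n}\bigr)^{n-k}$ tends to $e^{-\lambda v}$; and the middle factor $\tfrac{\lambda^{k} v^{k-1}}{(k-1)!}$ is already independent of $n$ and is precisely the non-exponential part of the Gamma$(k,\lambda)$ density.

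Combining these limits gives
\[
\lim_{n\to\infty} f_{V(k)}(v) \;=\; \frac{\lambda^{k}}{(k-1)!}\, v^{k-1} e^{-\lambda v},
\]
which is the PDF of Gamma$(k,\lambda)$. Since this holds for every $v\in[0,+\infty)$, the interpretation of ``$n\gg k$'' in the lemma is that the neighbor-searching volume distribution is, to arbitrary accuracy, a Gamma distribution with shape $k$ and rate $\lambda$.

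The only subtlety worth flagging is that the statement is really an asymptotic claim: Scheff\'e's lemma (pointwise convergence of densities implies convergence in total variation, since the limit is itself a density) can be invoked if one wants to promote the pointwise convergence to a distributional statement, but for the purposes the paper needs here the pointwise limit of the PDFs is sufficient. I do not foresee a genuine obstacle — the main care is simply in being explicit that $k$ is held fixed while $n\to\infty$, and in noting that the domain $v\in[0,n/\lambda]$ over which the binomial factor is meaningful eventually covers any bounded interval once $n$ is large enough, so the limit is valid on all of $[0,+\infty)$.
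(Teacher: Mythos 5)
Your proposal is correct and follows essentially the same route as the paper's own proof: both expand the PDF in Equation \eqref{eq:VolumePDFbinom} into the factors $\frac{n!}{(n-k)!\,n^{k}}$, $\bigl(1-\frac{\lambda v}{n}\bigr)^{n-k}$, and $\frac{\lambda^{k}v^{k-1}}{(k-1)!}$, then send the first to $1$ and the second to $e^{-\lambda v}$ as $n\to\infty$ with $k$ fixed, recovering the $\mathrm{Gamma}(k,\lambda)$ density. Your added remarks on Scheff\'e's lemma and on the domain $v\in[0,n/\lambda]$ are extra rigor beyond what the paper states, but they do not change the argument.
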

\begin{proof}
See Appendix A.
\end{proof}

The amount by which $n$ is greater than $k$ determines the accuracy of the approximation. As shown in Figure \ref{fig:NSDamma}, $\mathcal{V}(k,n,\lambda)$ fits $\mathrm{Gamma}(k,\lambda)$ almost perfectly when $n\geq10\times k$. In a classification task, this prerequisite of $n\gg k$ is often easily met because the classification boundary is only a fraction of all the data points.

Now we propose a new statistic measure by considering the quotient of two random variables that follow the neighbor-searching volume distribution with the same $\lambda$ and $v$.

\begin{defn}[Neighbor-searching Volume Ratio]
Given two independent random variables $V(k_1)$$\sim\mathcal{V}(k_{1},n,\lambda)$,$V(k_2)\sim\mathcal{V}(k_{2},n,\lambda)$, define the random variable $\frac{V(k_1)}{V(k_1)+V(k_2)}$ as the \emph{neighbor-searching volume ratio}, denoted as $R_{(k_{1},k_{2})}$.\label{def:VolumeDistributionRatio}
\end{defn}

The direct derivation of the distribution of $R_{(k_{1},k_{2})}$ is difficult. However, with Lemma \ref{lem:Gamma}, we can obtain a simplified form of its distribution.

\begin{lem}
For a homogeneous b.p.p, if $n\gg k_{1}$ and $n\gg k_{2}$, then $R_{(k_{1},k_{2})}\sim\mathrm{Beta}(k_{1},k_{2})$.\label{lem:Beta}
\end{lem}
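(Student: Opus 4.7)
The plan is to reduce the claim to a standard Gamma--Beta calculus identity by invoking the Gamma approximation already established in Lemma~\ref{lem:Gamma}. First I would apply Lemma~\ref{lem:Gamma} separately to each of the two independent random variables: under the hypothesis $n \gg k_1$ and $n \gg k_2$, we obtain $V(k_1) \sim \mathrm{Gamma}(k_1,\lambda)$ and $V(k_2) \sim \mathrm{Gamma}(k_2,\lambda)$. The essential structural observation is that both Gamma variables share the \emph{same} rate parameter $\lambda$, because both samples come from the same homogeneous b.p.p.\ with intensity $\lambda$; this common rate is precisely what makes the Beta identity work.

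Next I would invoke (or reprove) the classical fact that if $X \sim \mathrm{Gamma}(\alpha,\beta)$ and $Y \sim \mathrm{Gamma}(\gamma,\beta)$ are independent, then $X/(X+Y) \sim \mathrm{Beta}(\alpha,\gamma)$ and is moreover independent of $X+Y$. The proof of this auxiliary fact is a one-line change of variables: put $U = X+Y$ and $R = X/(X+Y)$ with Jacobian $|\partial(x,y)/\partial(u,r)| = u$, substitute into the joint density $\frac{\beta^{\alpha+\gamma}}{\Gamma(\alpha)\Gamma(\gamma)} x^{\alpha-1} y^{\gamma-1} e^{-\beta(x+y)}$, and read off the factorized density
\[
f_{U,R}(u,r) = \frac{\beta^{\alpha+\gamma}}{\Gamma(\alpha+\gamma)} u^{\alpha+\gamma-1} e^{-\beta u} \cdot \frac{\Gamma(\alpha+\gamma)}{\Gamma(\alpha)\Gamma(\gamma)} r^{\alpha-1}(1-r)^{\gamma-1},
\]
which is exactly $\mathrm{Gamma}(\alpha+\gamma,\beta) \otimes \mathrm{Beta}(\alpha,\gamma)$.

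Setting $\alpha = k_1$, $\gamma = k_2$, $\beta = \lambda$ and substituting $V(k_1), V(k_2)$ for $X,Y$ yields $R_{(k_1,k_2)} = V(k_1)/(V(k_1)+V(k_2)) \sim \mathrm{Beta}(k_1,k_2)$, as required. Note that the rate $\lambda$ cancels out of the Beta density, which is consistent with $R_{(k_1,k_2)}$ being a scale-invariant statistic. I do not anticipate any real obstacle: the only mild subtlety is that Lemma~\ref{lem:Gamma} provides a \emph{limiting} distribution, so strictly speaking the conclusion is an approximation whose accuracy improves as $n/\max(k_1,k_2)$ grows; this is already the regime in which the lemma is phrased and in which the subsequent drift-detection construction will operate, so no further qualification is needed beyond restating the $n \gg k_1, k_2$ hypothesis.
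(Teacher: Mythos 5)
Your proposal is correct and follows essentially the same route as the paper: apply Lemma \ref{lem:Gamma} to each volume to get independent $\mathrm{Gamma}(k_1,\lambda)$ and $\mathrm{Gamma}(k_2,\lambda)$ variables with common rate, then the change of variables $U=X+Y$, $V=X/(X+Y)$ with Jacobian $u$ and the resulting factorization of the joint density into a Gamma and a Beta part. The only cosmetic difference is that you frame the Gamma-to-Beta step as a citable classical identity, whereas the paper carries out the same computation inline.
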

\begin{proof}
See Appendix A.
\end{proof}
This result still requires the homogeneity of data distribution, which means that the distribution of $R_{(k_{1},k_{2})}$ does not hold for arbitrary data distribution in real-world applications. However, as we will see, for the purpose of drift detection, we do not need the exact distribution of $R_{(k_{1},k_{2})}$, which offers the opportunity to eliminate dependency on the uniformity of $\lambda$.

\begin{thm}
Given any continuous distribution $P$ in $\mathbb{R}^d$, $\mathbb{P}(R_{(k_{1},k_{2})}<0.5)$ or equivalently $\mathbb{P}(V(k_1)< V(k_2))$ only depends on the number of samples $n_1$, $n_2$ and $k$, and doesn't depend on distribution $P$. \label{thm:Nonuniform}
\end{thm}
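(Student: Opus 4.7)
The plan is to use exchangeability of the combined sample to wash out all dependence on $P$: after conditioning on the unordered multiset of all $n_1+n_2$ observations, the only remaining randomness is a uniform labeling telling us which points belong to sample $1$ and which to sample $2$, and the event $\{V(k_1)<V(k_2)\}$ becomes a purely combinatorial question about where the $k_1$-th label-$1$ and the $k_2$-th label-$2$ sit in the entry order induced by the search.

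First I would form the concatenated sequence $Y=(x_{1,1},\dots,x_{1,n_1},x_{2,1},\dots,x_{2,n_2})$ of $n=n_1+n_2$ i.i.d.\ draws from $P$ and condition on the unordered multiset $M=\{Y_j\}_{j=1}^n$. Continuity of $P$ makes the points of $M$ almost surely distinct, and exchangeability of $Y$ then forces the assignment of sample labels to the $n$ positions of $M$ to be uniform over the $\binom{n}{n_1}$ possibilities, conditional on $M$.

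Next I would exploit the fact that, for a fixed (non-adaptive) neighbor searching $\{S_i\}$, the order in which the elements of $M$ enter the search---i.e.\ the map from each $y\in M$ to its entry volume $\tau(y)=\inf\{V(S_i):y\in S_i\}$---is determined by $M$ alone. Reading the labels off in increasing entry order yields a sequence $(\ell_1,\dots,\ell_n)\in\{1,2\}^n$ which, conditional on $M$, is uniform over all arrangements of $n_1$ ones and $n_2$ twos. The event $\{V(k_1)<V(k_2)\}$ coincides with ``the $k_1$-th $1$ strictly precedes the $k_2$-th $2$ in $(\ell_1,\dots,\ell_n)$'', a combinatorial event whose probability under this uniform measure depends only on $n_1,n_2,k_1,k_2$. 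Integrating the (constant) conditional probability back over $M$ preserves its value, so
\[
\mathbb{P}(V(k_1)<V(k_2))=\mathbb{P}\bigl(\text{$k_1$-th $1$ before $k_2$-th $2$ in a uniform label sequence of length $n$}\bigr),
\]
and the right-hand side is manifestly $P$-independent.

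The delicate step is the claim that the entry order of the elements of $M$ is unambiguously determined by $M$. This is transparent for concentric balls around a fixed starting point (Figure~\ref{fig:NeighborSearch}(a)), but the multi-starting-point and imbalanced variants in Figure~\ref{fig:NeighborSearch}(d)--(f) could in principle admit two points of $M$ entering at exactly the same Euclidean volume, or ``tying'' inside a plateau of the search. Under continuity of $P$ both events have probability zero, and the measure-theoretic bookkeeping to justify this---together with confirming that property~5 of Definition~\ref{def:NeighborSearch} supplies a well-defined entry volume for every $y\in M$---is where the technical content of the proof actually sits.
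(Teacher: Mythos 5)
Your argument is correct under the hypotheses the paper actually uses, but it is a genuinely different proof from the paper's. The paper proceeds analytically: it restricts to the support $A=\{f>0\}$, invokes the Nash embedding theorem to build a $C^k$ homeomorphism $T$ that pushes $P$ forward to the uniform (unit-density) distribution on a manifold $N^d$, shows the comparison event $\{V(k_1)<V(k_2)\}$ is preserved under $T$, computes the law of the transformed neighbor-searching volume explicitly (an order-statistic/binomial tail on $[0,1)$), and writes $\mathbb{P}(V_{N^d}(k_1)<V_{N^d}(k_2))$ as a double integral of the two densities, which visibly depends only on $k_1,k_2,n_1,n_2$; since Nash requires smoothness, the paper needs $f\in C^k$ ($k>2$) plus a final approximation step for merely continuous densities. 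Your exchangeability/rank argument bypasses all of that machinery: conditioning on the pooled multiset, the labels are uniform over the $\binom{n_1+n_2}{n_1}$ assignments, the entry-volume map $\tau$ reduces the event to ``the $k_1$-th label-$1$ precedes the $k_2$-th label-$2$,'' and the conditional probability is a purely combinatorial (negative-hypergeometric-type) constant, so it even yields the explicit $P$-free value rather than just $P$-independence. What you lose is only in the hypotheses, and you correctly flag where: bare continuity (atomlessness) of $P$ does not by itself rule out ties in entry volume --- e.g.\ $P$ uniform on a sphere with concentric-ball search makes all entry volumes equal (and indeed makes the $k$th neighbor ill-defined) --- so the no-tie step needs $P$ to have a density, under which the level sets $\{\tau=t\}$ have Lebesgue measure zero (by continuity of measure along the nested, volume-continuous search steps) and hence $P$-measure zero; since the paper's own proof assumes a density (even a $C^k$ one), your argument covers the theorem as the paper intends it, with weaker regularity and considerably less technology.
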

\begin{proof}
See Appendix B. 
\end{proof}

Definition \ref{def:VolumeDistribution}, \ref{def:VolumeDistributionRatio} and Theorem \ref{thm:Nonuniform} reveal an important fact: Given a set of sample points $X_{1}$ of an arbitrary distribution, we find the $k_{1}\mathrm{th}$ neighbor of certain neighbor searching and let the corresponding volume be $V(k_1)$. Now with another set of sample points $X_{2}$ of the same distribution, we find the $k_{2}\mathrm{th}$ neighbor and let the corresponding volume be $V(k_2)$. Then the probability of $V(k_1)>V(k_2)$ can be computed from the Beta distribution and the result does not depend on the value or uniformity of the intensity of the search steps of $V(k_1)$ and $V(k_2)$. We give a definition for this important statistic.

\begin{defn}[Neighbor-searching Discrepancy]
Given two sets of sample points $X_{1},X_{2}$ generated from a b.p.p in $\mathbb{R}^{d}$, for certain neighbor searching, let random variable $V(k_1)$ be the $k_{1}\mathrm{th}$ neighbor-searching volume in $X_{1}$ and $V(k_2)$ be the $k_{2}\mathrm{th}$ neighbor-searching volume in $X_{2}$, with random variable $R_{(k_{1},k_{2})}=V(k_1)/(V(k_1)+V(k_2))$. We define the \emph{neighbor-searching discrepancy} as
\[
\mathrm{NSD}(k_{1},k_{2})=\mathbb{P}(R_{(k_{1},k_{2})}<0.5)=\mathrm{CDF}_{\mathrm{Beta}(k_{1},k_{2})}(0.5).
\]
\label{def:NSD}
\end{defn}
\vspace{-0.3cm}
Note that the homogeneity assumption has been removed from this definition. Also, although the Beta distribution is continuous, we only consider $k_{1},k_{2}$ as positive integers since their original meaning is the number of sample points in a certain region.

As shown in Figure \ref{fig:Beta}, $\mathrm{NSD}$ is equal to the area under the curve of Beta PDF between $[0,0.5]$. When $k_{2}<k_{1}$, $\mathrm{NSD}(k_{1},k_{2})$ is small, meaning that the probability of $V(k_2)>V(k_2)$ is small, and vice versa. When $k_{1}=k_{2}$, $\mathrm{NSD}(k_{1},k_{2})$ is exactly 0.5. Also, note that $\mathrm{NSD}$ is not symmetric with respect to the difference between $k_{1}$ and $k_{2}$.

\begin{figure*}
\centering
\subfloat[$\mathrm{NSD}(10,5)$]
{\includegraphics[scale=0.7]{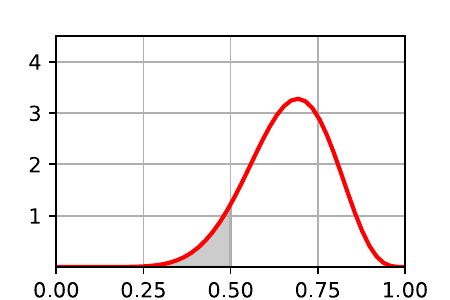}}
\hfill
\subfloat[$\mathrm{NSD}(10,10)$]
{\includegraphics[scale=0.7]{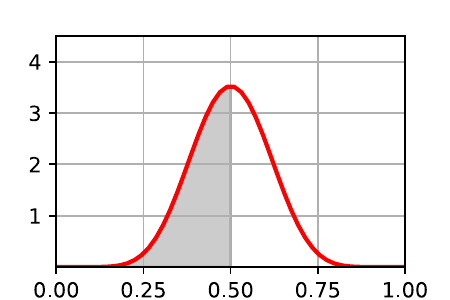}}
\hfill
\subfloat[$\mathrm{NSD}(10,15)$]
{\includegraphics[scale=0.7]{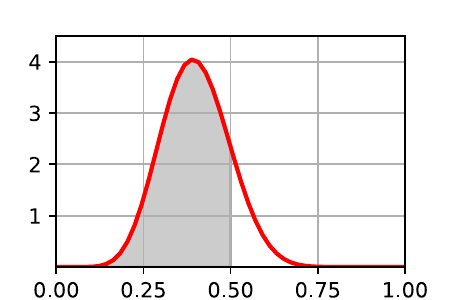}}
\caption{Neighbor-searching discrepancy equals area under the curve of Beta PDF between $[0,0.5]$.}
\label{fig:Beta}
\end{figure*}



\section{Real Concept Drift Detection\label{sec:DetectionMethod}}

In this section, we first introduce the concept of classification gap as a replacement for the traditional concept of classification boundary, which we avoid using because it is algorithm-specific and ambiguous; then, we develop a drift detection method based on the previous results of neighbor-searching discrepancy to detect the classification gap change. The new method does not rely on re-sampling techniques and is able to indicate the direction of the gap change.

\subsection{Classification Gap}

Real drift is defined in the literature as classification boundary change \cite{gama14survey_s}. However, this definition is of little help to standalone real drift detection due to the lack of an exact description of classification boundary. In practice, a classification boundary is a calculated line (in 2D or a higher dimensional plane) that is algorithm specific. For example, given a dataset of two classes, the classification boundaries generated by KNN and SVM could be significantly different. We therefore extend the traditional definition and consider real concept drift as a classification gap change. As shown in Figure \ref{fig:RealDrift}, traditional classification boundaries (dashed lines) are extended to the classification gap (solid lines) which is bounded against each class. The classification gap is not only able to reflect classification boundary change, but also has the ability to describe the moving direction of the change. Note that virtual drift, which has no impact on the classification result, should not affect the classification gap, as shown in Figure \ref{fig:RealDrift} (d). We define classification gap as following.

\begin{defn}[Classification Gap]
Given a sample space $A\subseteq \mathbb{R}^{d}$ of binary classes $\{+,-\}$, let $C^+$ and $C^-$ be the open subsets of $A$ where the distribution density is non-zero. That is, $C^+ = \{x: P(+|x)>0, x \in A\}, C^- = \{x: P(-|x)>0, x \in A\}$, and $\overline{C^+}, \overline{C^-}$ are their corresponding closures. Define the \emph{classification gap} as
\[
	\mathbb{G} = \mathbb{D}^+ \cap \mathbb{D}^-
\]
where
\[
	\mathbb{D}^+ = \{x: \argmin\limits_{x \in \overline{C^+}} dist(x, x^-), \forall x^-\in C^-\}
\]
and
\[
	\mathbb{D}^- = \{x: \argmin\limits_{x \in \overline{C^-}} dist(x, x^+), \forall x^+\in C^+\}
\]
for distance function $dist$.
\end{defn}

\begin{figure}
\centering
\subfloat[Original data]
{\includegraphics[scale=0.32]{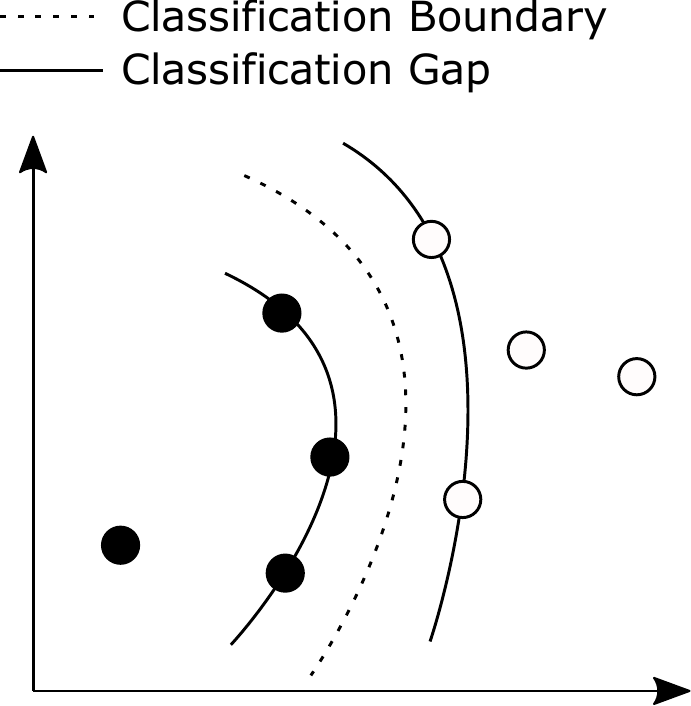}}
\hfill
\subfloat[Real drift (retreat)]
{\includegraphics[scale=0.32]{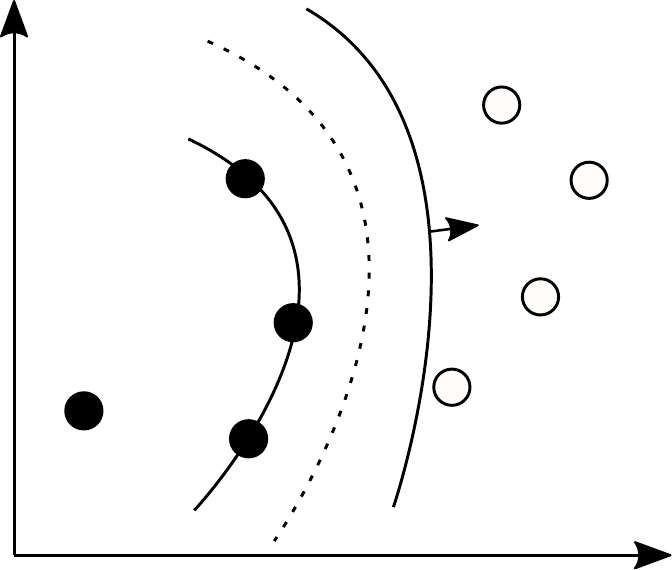}}
\hfill
\subfloat[Real drift (invasion)]
{\includegraphics[scale=0.32]{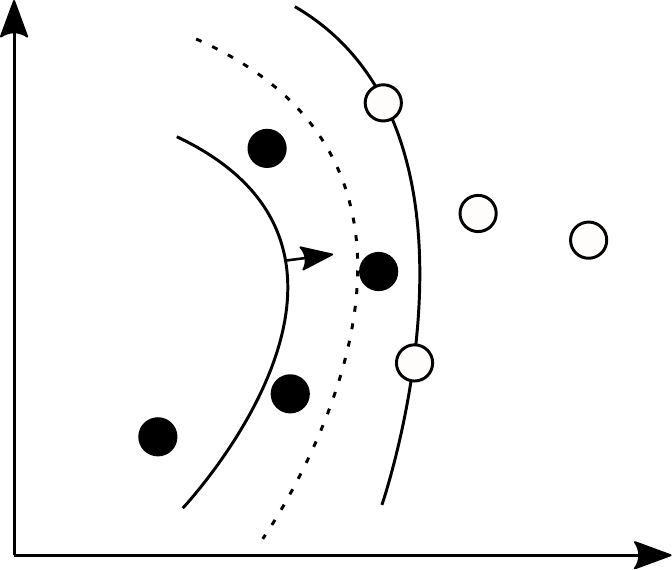}}
\hfill
\subfloat[Virtual drift]
{\includegraphics[scale=0.32]{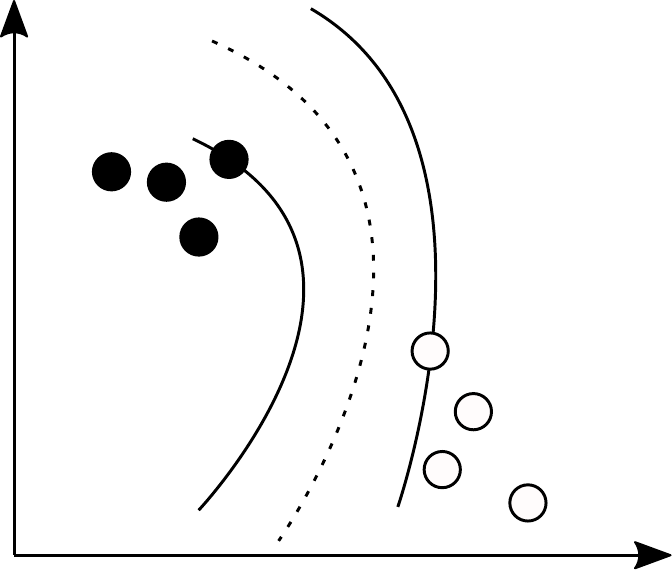}}
\caption{Different types of real concept drift and virtual drift reflecting on classification gap change.}
\label{fig:RealDrift}
\end{figure}

Next, with the help of neighbor-searching discrepancy, we are able to statistically detect and measure the classification gap change.

\subsection{Detect Classification Gap Changes}

Neighbor-searching discrepancy enables us to test if two sample sets have the same distribution in a sub-region. The prerequisite of this method is that the neighbor searching, that is, by definition, the set of search steps, must be predetermined to describe the region of interest. If we are to detect the classification gap change, the most important step is to define neighbor searching such that the shape of the search steps coincides with the shape of the classification gap, so that changes in the gap will reflect the probability of the instance counting difference within the search step, which is calculated by the neighbor-searching discrepancy.

\begin{figure}
\centering
\subfloat[Simple spherical search steps to the nearest neighbor of opposite class.]
{\includegraphics[scale=0.28]{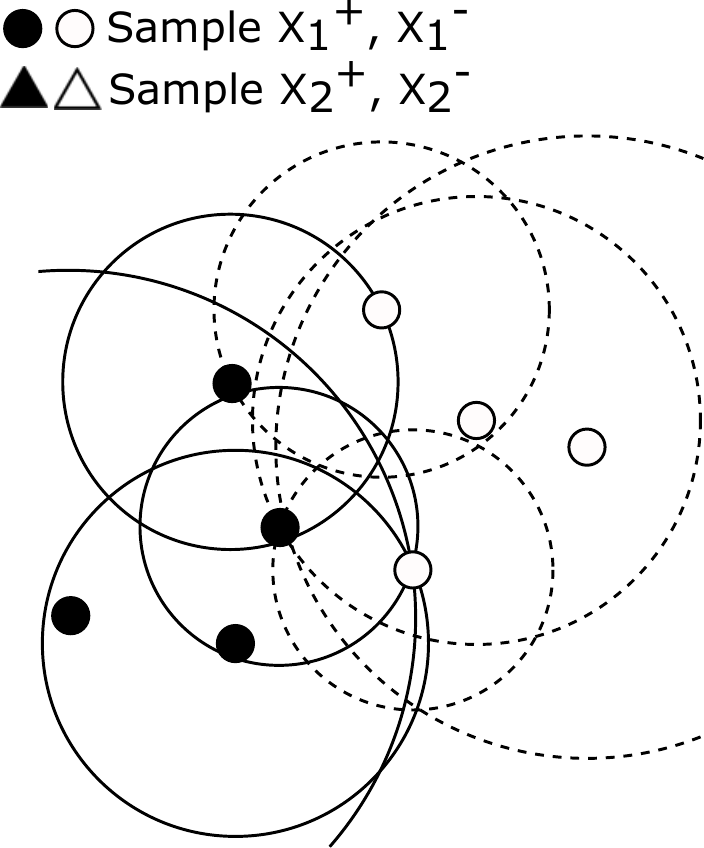}}
\hfill
\subfloat[Search step to approximate classification gap.]
{\includegraphics[scale=0.28]{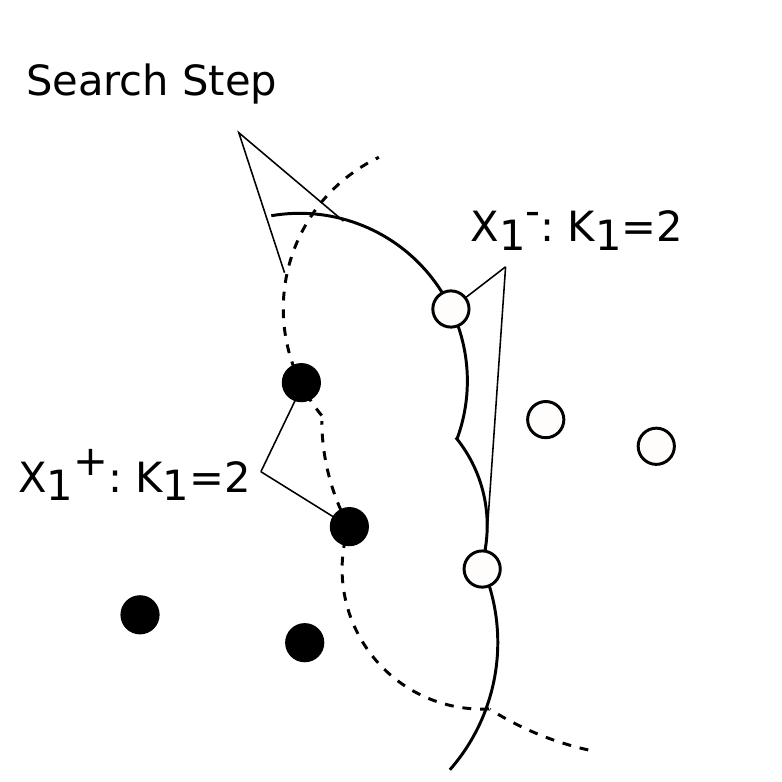}}
\hfill
\subfloat[Invasion and retreat of classification gap change.]
{\includegraphics[scale=0.28]{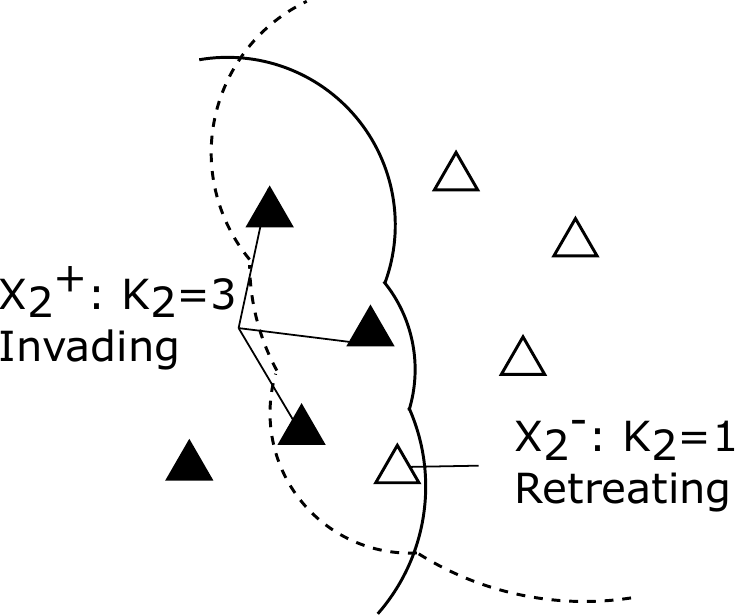}}
\caption{Neighbor searching is constructed by combining simple spherical searches on sample X1, and then invasion and retreat of classification gap change is tested on sample X2.}
\label{fig:RetreatInvasion}
\end{figure}

In practice, however, the classification gap of a dataset can be an arbitrary shape, thus simple neighbor searching, such as of spheres or hyperplanes, can hardly describe the shape accurately. However, according to Theorem \ref{thm:Nonuniform}, neighbor-searching discrepancy is a distribution-free statistic, which enables us to estimate the classification gap and describe its changes.

\begin{algorithm}[t]
\caption{Neighbor-Searching Discrepancy Drift Detection (NSD)}
\label{alg:1}
\SetKwInOut{Input}{input}
\SetKwInOut{Output}{output}
\Input{$O$, origin set containing starting points of neighbor searching;
\newline $R$, reference set for searching neighbors;
\newline $T$, test set for detecting gap change;
\newline $k$, number of neighbors for simple spherical neighbor-searching;
\newline $\theta$, confidence threshold for drift detection, default 0.05.
}
\Output{detection result for test set,
\newline 0 (no drift)
\newline 1 (drift invasion)
\newline 2 (drift retreat)
}
\BlankLine
initialize $P,Q:=$ empty sets\;
\ForAll{$x_i$ in $O$}{
	find $k$ nearest neighbors of $x_i$ from $R$, as $p_i$\;
	$P:=P \bigcup p_i$\;
	store the distance from $x_i$ to its $k$th neighbor, as $r_i$\;
}
$K_1:=$ size of $P$\;
\ForAll{$x_i$ in $O$}{
	find all points from $T$ whose distance to $x_i$ is smaller than $r_i$, as $q_i$\;
	$Q:=Q \bigcup q_i$\;
}
$K_2:=$ size of $Q$\;
\If{$K_1>K_2$ and $\mathrm{NSD}(K_1,K_2+1)<\theta$}{
	\KwRet{2 (drift retreat)}
}
\ElseIf{$K_1<K_2$ and $1-\mathrm{NSD}(K_1,K_2)<\theta$}{
	\KwRet{1 (drift invasion)}
}
\Else{
	\KwRet{0 (no drift)}
}
\end{algorithm}
The classification gap of a data set can be estimated by constructing neighbor searching in the following way. Given a binary classification sample set $X_1$, denote its points from different classes as $X_1^+$, $X_1^-$. Since the classification gap comprises two sides, the same procedure is conducted to estimate both sides. To demonstrate, we randomly choose one side, the one that is bounded by $X_1^-$. In this case, we denote $X_1^+$ as the \emph{origin set} $O$ and $X_1^-$ as the \emph{reference set} $R$. Now consider constructing a simple neighbor searching of spheres $\mathscr{S}^{(j)}=\{S_i^{(j)}\}$ for each point $j$ in $O$ with a common index set $\mathcal{I}=\{i\}$, such that $S_i^{(j)}$ has radius $C_{j}\cdot i$. The neighbor searching $\mathscr{S}^{\ast}$ is a new search with the same index $\mathcal{I}$, and the shape of its search steps is defined by constants $\{C_{j}\}$. Our goal, thus, becomes choosing a proper set of $C_{j}$ according to the shape of the classification gap bounded by $R$. This can be achieved by finding the $k$th neighbor of each $\mathscr{S}^{(j)}$ in $R$, for some user defined $k$, and measuring the radius of $S^{(j)}(k)$ as $r_{j}$. Without losing generality, let $i=1$ for this particular search step $S^{(j)}(k)$, then $C_{j}=r_{j}/i=r_{j}$. Figure \ref{fig:RetreatInvasion} (a) demonstrates this process for $k=1$. We can see in Figure \ref{fig:RetreatInvasion} (b), the search step $S^{\ast}$ is able to estimate the classification gap. Once gap estimation is done, we can test if there is a change in the gap by measuring another sample set $X_2$. In this case, $X_2^-$, denoted as the \emph{test set} $T$, has $k_2$ points falling within $S^{\ast}$. Compared with $k_1$ points from $R$ within $S^{\ast}$, the probability $P(S^{\ast}(k_2+1)>S^{\ast}(k_1))$ and $P(S^{\ast}(k_2)<S^{\ast}(k_1))$ can both be calculated by neighbor-searching discrepancy in Definition \ref{def:NSD}. Small values of these probabilities imply a classification gap change, as shown in Figure \ref{fig:RetreatInvasion} (c). The retreating and invading of the test set can be defined as follows.

\begin{defn}
\textbf{(Retreat and Invasion)} Let $S(k_1)$ be the search step that finds the $k_1$th neighbor in a reference set, let $k_2$ be the number of points from a test set $T$ that fall within $S(k_1)$, given a user-specified confidence threshold $\theta$, define \emph{retreat of $T$} as $\mathrm{NSD}(k_1,k_2+1)<\theta$ when $k_1\ge k_2$, and \emph{invasion of $T$} as $1-\mathrm{NSD}(k_1,k_2)<\theta$ when $k_1<k_2$.
\label{def:RetreatInvasion}
\end{defn}

\begin{rem*}
Based on the new indicators of retreat and invasion, it is possible to derive measurements from other perspectives of describing real drift; for example, widening/narrowing of a classification gap or separability of two classes.
\end{rem*}

Once we understand the role that neighbor searching plays behind the scenes which justifies the use of neighbor-searching discrepancy, the algorithm itself becomes straightforward, as shown in Algorithm \ref{alg:1}. Lines 1-6 apply kNN to construct $S^{\ast}(k_1)$ for gap estimation and store $\{r_{i}\}$ and $k_1$; lines 7-10 count the points from the test set in $S^{\ast}(k_1)$, which is done by comparing $r_{i}$ with the distance of each point from the test set to the corresponding point $x_i$ in the origin set; finally, lines 11-16 calculate the neighbor-searching discrepancy and return the test result. The procedure is conducted twice, first with $O=X_{1}^{+},R=X_{1}^{-},T=X_{2}^{-}$ for testing invasion/retreat of points of class $-$; and then with $O=X_{1}^{-},R=X_{1}^{+},T=X_{2}^{+}$ for testing invasion/retreat of points of class $+$. The complexity of the algorithm is $O(n^2k)$ for sample size $n$, given that the kNN operation is $O(nk)$, while updating one point has complexity $O(n)$.

\section{Experiment evaluation\label{sec:Experiment}}

Our evaluation of the proposed concept drift detection method consists of three sections, using 11 experiments:

\begin{enumerate}
\item \textbf{(Experiments 1-8)} The proposed statistic--Neighbor-Searching Discrepancy is the foundation of our drift detection method and could be potentially used in other instance-based problems. We thus design comprehensive experiments to demonstrate and verify its correctness and robustness against various data distributions and dimensions, using the Monte Carlo method.
\item \textbf{(Experiments 9, 10)} We apply the proposed drift detection method to synthetic datasets and compare our method to Lu et al. \cite{lu14cmp_c} for both accuracy and efficiency. The synthetic datasets are designed to simulate concept drift of different types and different degrees, which enables us to control the changes more easily and to see how our detection method performs against various changes.
\item \textbf{(Experiments 11)} We apply the proposed drift detection method to classification tasks on five real-world datasets and compare it with eight popular drift handling methods, including error rate-based methods, distribution test-based methods, as well as ensemble methods.
\end{enumerate}

The source code of the experiments can be downloaded from \href{https://github.com/geogu/cd.git}{https://github.com/geogu/cd.git}

\subsection{Evaluating the NSD of a single starting point}

In Section \ref{sec:NeighborSearchDiscrepancy}, we proposed an exact statistic, NSD, to measure the probability of observing an empirical classification boundary shift under the assumption of stationary data distribution. NSD not only serves as the foundation of the proposed detection method, the statistic itself is a general tool with the potential for developing new test methods. Thus, we conducted a series of experiments to demonstrate its effectiveness under various generated data distributions, from simple block-shaped distribution to multivariate Normal, Poisson and Gamma distribution, with dimensionality ranging from 2D to 1000D. We compared the calculated results to the Monte Carlo simulated result to show that they are asymptotically equal. For all experiments, the data point size and Monte Carlo sample size were both set to 1000 unless indicated otherwise.

\begin{figure*}
\centering
\subfloat[Starting point location: center, border, outer, remote.]
{\includegraphics[scale=0.28]{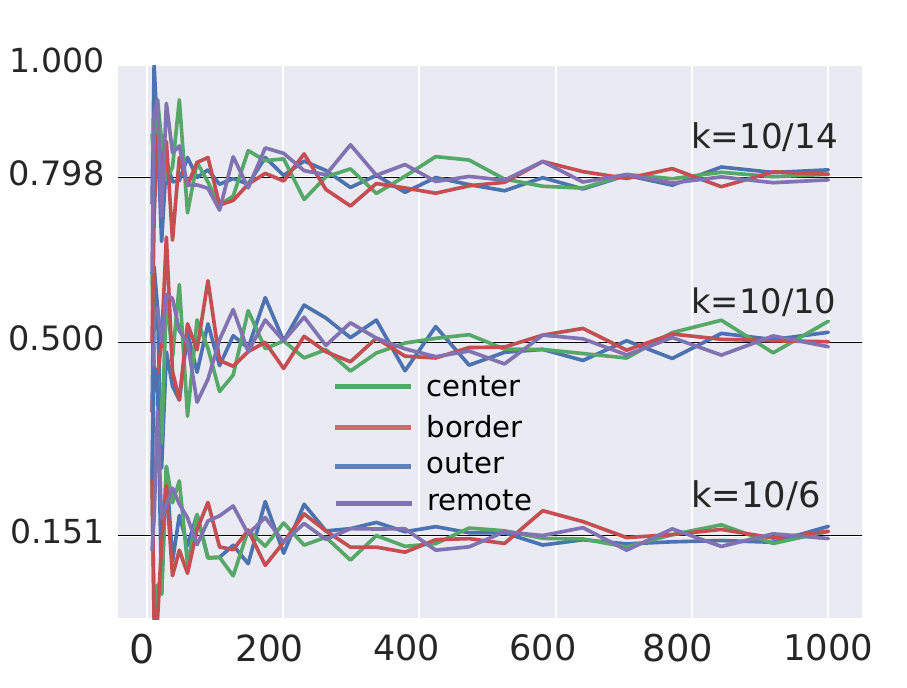}}
\subfloat[Data Dimensions: 10, 100, 1000.]
{\includegraphics[scale=0.28]{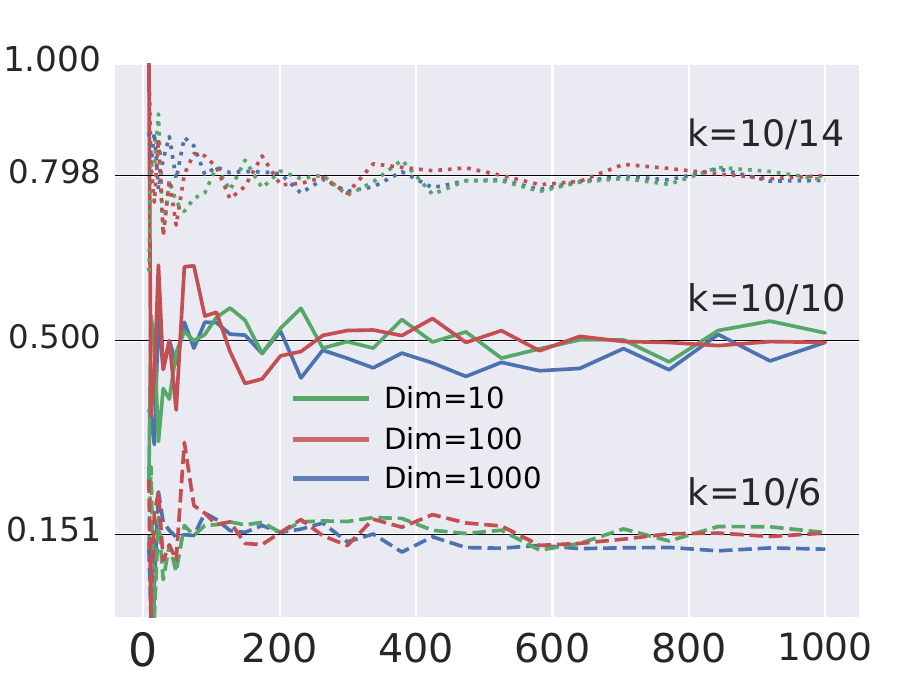}}
\subfloat[Normal distribution data.]
{\includegraphics[scale=0.28]{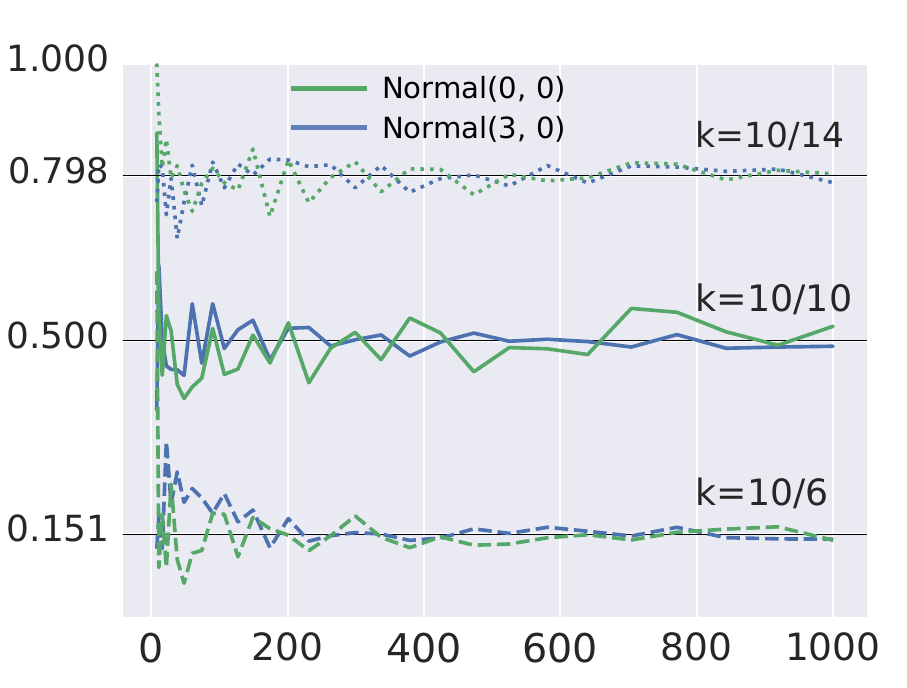}}
\subfloat[Gamma distribution data.]
{\includegraphics[scale=0.28]{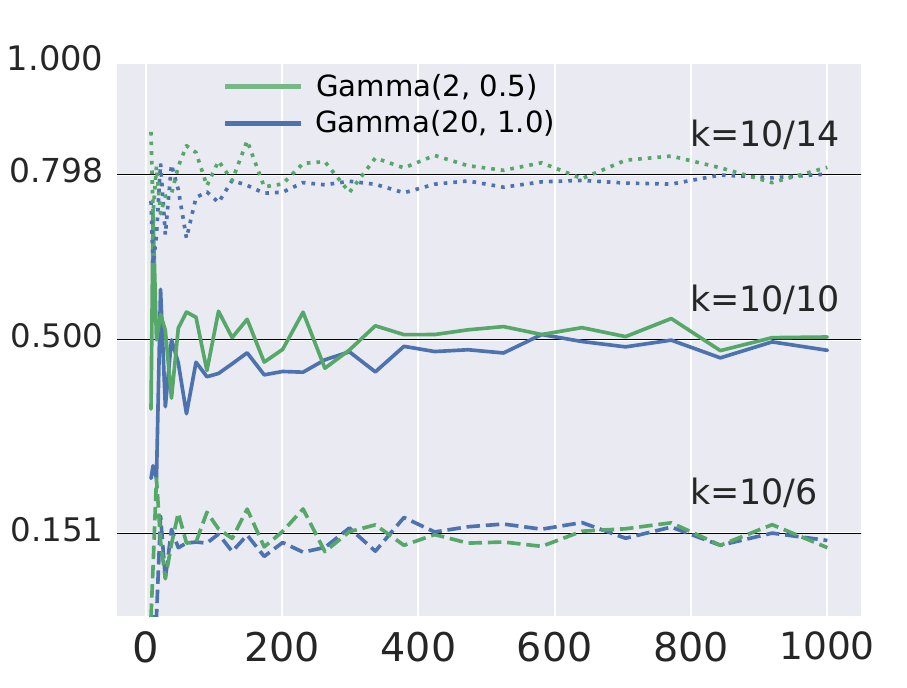}}

\subfloat[Poisson distribution data.]
{\includegraphics[scale=0.28]{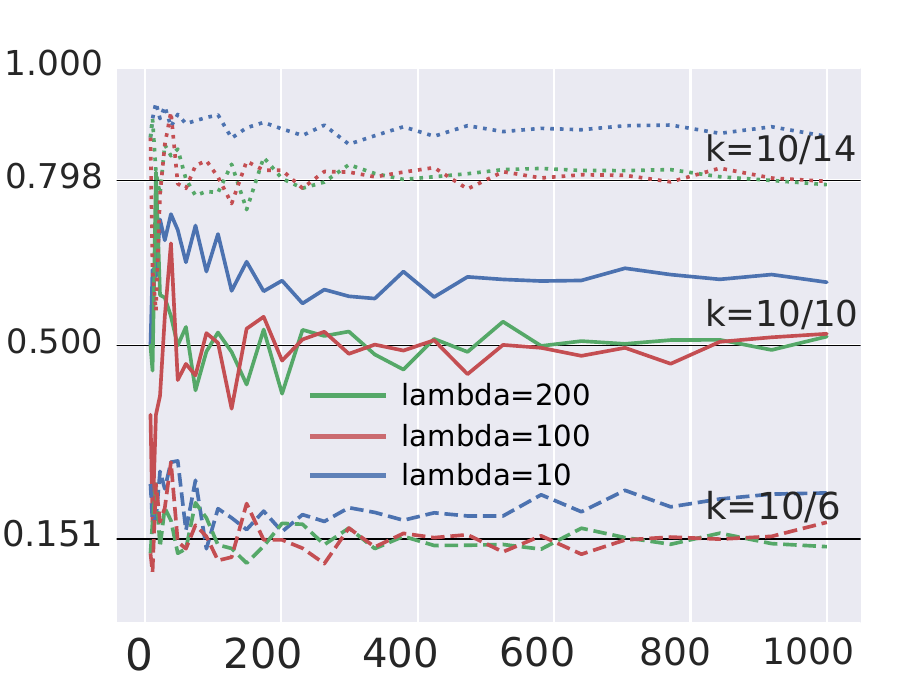}}
\subfloat[Disjoint search steps.]
{\includegraphics[scale=0.28]{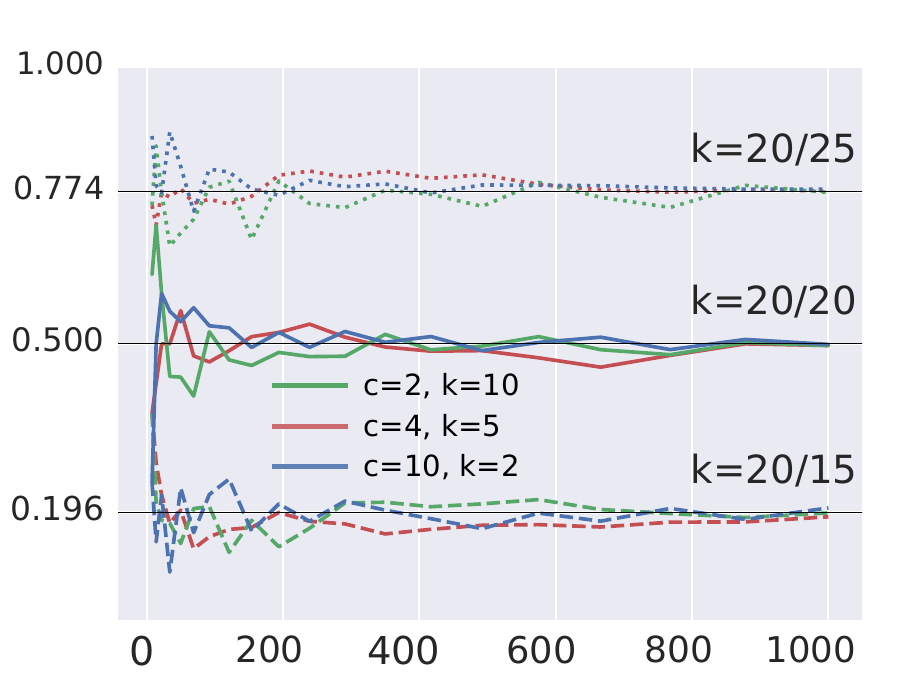}}
\subfloat[Overlapping search steps.]
{\includegraphics[scale=0.28]{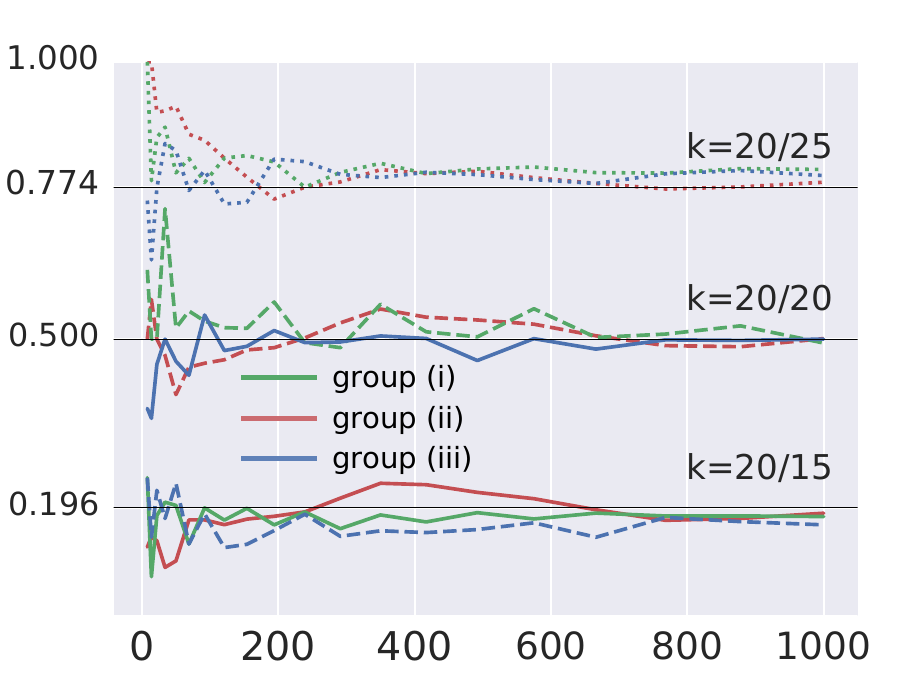}}
\subfloat[Stepped search steps.]
{\includegraphics[scale=0.25]{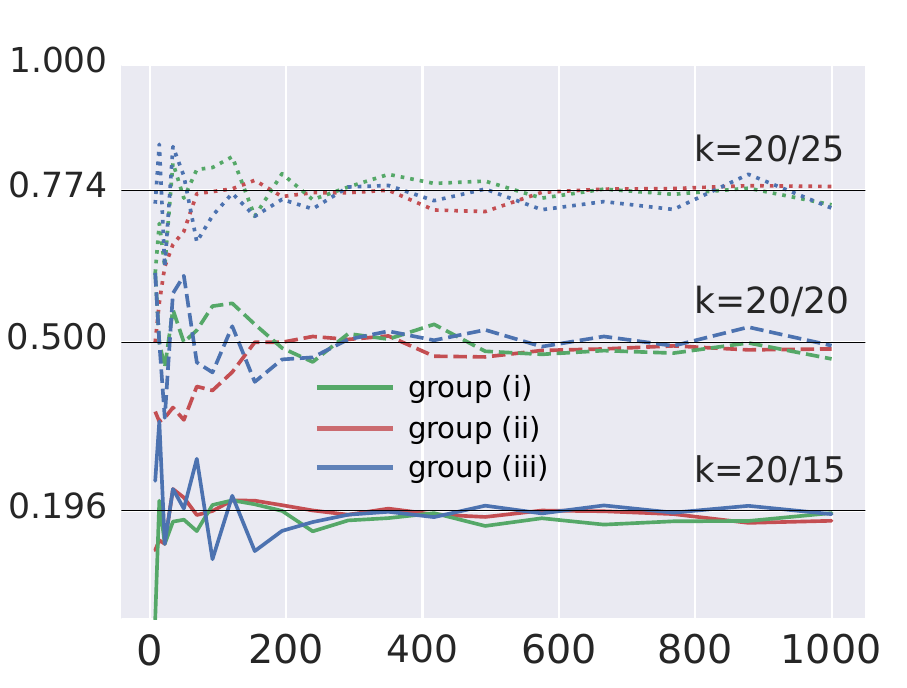}}
\caption{Comparison of the calculated NSD (black solid horizontal lines) with Monte Carlo simulation result (colored lines). The simulation result converges to NSD as sample size increases.
\label{fig:NSD}}
\end{figure*}

\subsubsection*{Experiment 1. NSD convergence rate - starting point location}

In this experiment, we begin with the simplest case of computing NSD and consider neighbor searching only of a single starting point. Data points are sampled from a 2D uniform\_(0,1) square. The starting point is positioned in a different vicinity to the sample points, namely, Center (0.5, 0.5); Border (1.0, 0.3); Outer (1.1, 0.3) and Remote (200.0, 0.3). For each starting point, we apply kNN to find the search step $S$ with the number of neighbors $k_{1}=10$ in the first batch of samples, then with the second batch of samples, we count the number of points that fall in $S$, denoted as $k_{2}$. This procedure is repeated 1000 times for each position of the starting point. We randomly choose $14,10,6$ and count the percentage of occurrence when $k_{2}$ is smaller than the selected value. Lastly, we plot the value of this percentage and compare it with the values of $\mathrm{NSD}(10,14)=0.798$, $\mathrm{NSD}(10,10)=0.5$, $\mathrm{NSD}(10,6)=0.151$, which are calculated directly from the formula of Definition \ref{def:NSD}. The result is shown in Figure \ref{fig:NSD}(a). As the number of repetitions increases, the percentage of $k_{2}<14,10,6$ asymptotically approaches NSD, which indicates that \emph{NSD is independent of the position of the starting point}. Also note that in the case of Remote (200.0, 0.3), the search step volume is approximately $124410.21$, which is much larger than the data points' distribution region area $[0, 1]$. This shows that \emph{NSD is independent of the neighbor-searching volume size}.

\subsubsection*{Experiment 2. NSD convergence rate - high dimension}

In this experiment, we apply NSD to data of different dimensions to verify its robustness to dimensionality. Data points are sampled from a multi-dimensional uniform(0,1) cube. The starting point is set to origin $(0,\ldots)$ for simplicity. The sample size is reduced to $500$ for ease of simulating a high dimensional situation where the dimensionality is much higher than sample size. We perform three groups of tests with dimensions $10,100,1000$. Similar to the previous experiment, we compute the search step $S$ for $k_{1}=10$ and count the percentage of occurrence when $k_{2}$ is smaller than $14,10,6$. The result is shown in Figure \ref{fig:NSD}(b). As the number of repetitions increases, the percentage of $k_{2}<14,10,6$ asymptotically approaches the calculated values of NSD, which indicates that \emph{NSD is independent of dimensionality,} as long as the prerequisite of $n\gg k$ is met. These results also show that the proposed statistic NSD is unique in the sense that it is based on Euclidean-distance yet is not affected by the curse of dimensionality.

\subsubsection*{Experiment 3.  NSD convergence rate - Normal distribution}

The following experiments demonstrate that the NSD is distribution-free. We start with a Normal distribution with different mean values. Data points are generated according to the 2D normal distribution with mean $\mu=0$ and $3.0$, fixed standard deviation $\sigma=1.0$. The starting point is set to $(0,0)$. For each mean value, similar to the previous experiment, we compute the search step $S$ for $k_{1}=10$ and count the percentage of occurrence when $k_{2}$ is smaller than $14,10,6$. The result is shown in Figure \ref{fig:NSD}(c). As the number of repetitions increases, the percentage of $k_{2}<14,10,6$ asymptotically approaches the calculated values of NSD, which indicates that \emph{NSD is not affected by the data distribution}.

\subsubsection*{Experiment 4.  NSD convergence rate - Gamma distribution}

This experiment is similar to Experiment 3, with data of Gamma distribution. The parameter of Gamma distribution is set to $\alpha=2,\beta=0.5$ and $\alpha=20,\beta=1.0$. The result is shown in Figure \ref{fig:NSD}(d). As the number of repetitions increases, the simulated value asymptotically approaches the calculated values of NSD, which indicates that \emph{NSD is not affected by the data distribution}.

\subsubsection*{Experiment 5. NSD convergence rate -  Poisson distribution}

In this experiment, we apply NSD to data with a Poisson distribution. Note that a Poisson distribution is a discrete distribution which is different from the previously mentioned continuous Normal and Gamma distribution. Discrete distribution impacts the computation of NSD, as shown in this experiment, since the search step volume can only take on discrete values in this situation. The data points are generated from multivariate Poisson distribution using the Trivariate Reduction method \cite{mardia1970families} with $(X,Y)\sim\mathrm{Poisson}(0.5\lambda,0.5\lambda,0.5\lambda)$. The parameter $\lambda$ is set to $10,100,200$. The test procedure is identical to the previous experiments. The result is shown in Figure \ref{fig:NSD}(e). As can be seen, the simulated value for $\lambda=100$ and $200$ (red and green lines) matches the calculated value of NSD. However, for $\lambda=10$ (blue line), the simulated value deviates from the calculated value of NSD by a large margin. The reason for this is that for a Poisson distribution, when $\lambda$ is small, data points can only have a few discrete locations. This violates the $n\gg k$ condition for neighbor-searching volume distribution, which leads to a deviated value of NSD. From this experiment, we see that in order to apply NSD to discrete data, extra attention is needed to make certain that the data has adequate dispersion.

\subsection{Evaluating NSD of multiple starting points}

The aforementioned experiments above demonstrate that NSD is distribution-free and dimension-independent, but they only consider the computation of NSD from a single starting point. Now, we focus on applying NSD to multiple starting points.

\subsubsection*{Experiment 6. Multiple starting points with non-overlapping search steps}

We first consider the simple case in which the multiple starting points are far away from each other. The search steps of these points should be non-overlapping if we choose a small enough $k$. The data points are sampled from a 2D uniform(0,1) square with sample size $n=1000$. We perform three groups of tests with different numbers of multiple starting points, denoted as $c$. The settings for each group are $(c=2,k=10)$, $(c=4,k=5)$ and $(c=10,k=2)$. The value of $k$ is chosen for convenience so that the number of neighbors in the search step is always the same, i.e., $k_{1}=c\times k=20$. For each group, the multiple starting points are positioned evenly across the $[(0,0),(1,1)]$ span to make sure their search step does not overlap. With the first batch of samples, we compute the search step $S$ with the number of neighbors $k_1$, then with the second batch of samples, we count the number of points $k_{2}$ that fall in $S$. After repetition, we count the percentage of occurrence when $k_{2}$ is smaller than the randomly chosen values $15,20,25$. The result is shown in Figure \ref{fig:NSD}(f). As the number of repetitions increases, the percentage of $k_{2}<15,20,25$ asymptotically approaches the calculated values $\mathrm{NSD}(20,15)=0.196$, $\mathrm{NSD}(20,20)=0.5$, $\mathrm{NSD}(20,25)=0.774$, which indicates that \emph{NSD is applicable for non-overlapping neighbor searching}.

\subsubsection*{Experiment 7. Multiple starting points with overlapping search steps}

This experiment verifies NSD on multiple starting points with overlapping search steps, meaning that computations on each point are not independent. The data points are sampled from a 2D uniform(0,1) square with sample size $n=1000$. The test procedure is similar to the previous experiment, the only difference being that the locations of the multiple starting points need to be close enough for their search step to overlap. We use three groups of multiple starting points with size 2, 4 and 10. These points are set as follows: i) (0.5,0.5), (0.536,0.5) for $k=15$; ii) (0.5,0.5), (0.53,0.5), (0.56,0.5), (0.586,0.5) for $k=10$; iii) (0,0.5), (0,0.52), (0,0.54), (0,0.56), (0,0.585), (1,0.5), (1,0.52), (1,0.54), (1,0.56), (1,0.585) for $k=5$. Because the search step $S$ may vary for different point samples when the search steps overlap, these points are carefully chosen, for clarity purposes, so that their $N(S)$ is approximately 20. The result is shown in Figure \ref{fig:NSD}(g). As the number of repetitions increases, the percentage of $k_{2}<15,20,25$ asymptotically approaches the calculated values of NSD, which indicates that \emph{NSD is applicable for overlapping neighbor searching}.

\subsubsection*{Experiment 8. Multiple starting points with stepped search}

In this experiment, we apply NSD in an interesting scenario in which the search steps of multiple starting points is calculated in a stepped manner, as demonstrated in Figure \ref{fig:NeighborSearch}(f). The data points are sampled from a 2D uniform(0,1) square as above. Three groups of multiple starting points are randomly chosen to represent a different degree of proximity to the sample points. Each group has four points and their values are: i) (0.5,0.5), (0.52,0.5), (0.54, 0.5), (0.56, 0.5) for the inner points; ii) (0.5, 0), (0.52, 0), (0.54, 0), (0.56, 0) for the border points; iii) (0.5, 10.5), (0.52, 10.5), (0.54, 10.5), (0.56, 10.5) for the outer points. The value of $k$ is set to 5. The search steps for each group of four points is computed sequentially so that the later points will always find five new neighbors that are not neighbors of the other three points. In this way, the value of $N(S)$ will always be 20. The result is shown in Figure \ref{fig:NSD}(h). As the number of repetitions increases, the percentage of $k_{2}<15,20,25$ asymptotically approaches the calculated values of NSD, which shows that NSD is applicable in this interesting scenario. This experiment also suggests that \emph{novel neighbor-searching strategies can be used as long as the neighbor searching conditions are met}.

\subsection{Evaluating the drift detection method}

The aforementioned experiments demonstrate that the NSD is a distribution-free and dimension-robust statistic and can be applied to single or multiple starting points, which forms the foundation of the proposed drift detection method. The following experiments demonstrate the effectiveness and efficiency of the proposed drift detection method. First, we compare our method with a state-of-the-art drift detection method in terms of both detection accuracy and computational efficiency, using synthetic datasets that are specially designed to present classification boundary change, i.e., real concept drift. Synthetic datasets are particularly useful for customizing a variety of concept drift cases, which helps us to determine the strengths and weaknesses of the proposed method. We then apply our drift detection method in classification tasks on real-world datasets and compare the final classification performance with several of the most popular concept drift tolerant classification methods.

\subsubsection*{Experiment 9. Classification boundary change detection}

In this experiment, we apply the proposed drift detection method to a group of synthetic datasets. We compare our method with another drift detection method, competence model-based concept drift detection (CM) proposed by Lu et al. \cite{lu14cmp_c}. We choose their method for comparison because first, CM is very sensitive to distribution change and is reported to have better accuracy than many other two-sample tests and distribution change detection methods, and second, unlike other error rate-based detection methods, our methods do not depend on a classifier and therefore have a broader application scope.

The datasets are designed to include different types of classification boundary drift. A total of nine datasets are used in this experiment, each of size 200,000, containing 99 drift locations. The datasets fall into 3 groups, as shown in Table \ref{tab:ExpSynthetic}. Datasets in the first group (No.1\textasciitilde 5) are generated in hyper-cube uniform(0,1) with a classification boundary specified by formula $Y=1$ when $\sum x_{i}>d\sqrt{\Delta^{2}/d}$ and $0$ otherwise, where $d$ is the number of dimensions and $\Delta$ is the offset value of the boundary shift. A larger value of $\Delta$ means a higher degree of drift. Datasets in the second group (No.6,7) are also generated in hyper-cube uniform(0,1). This time their classification boundary is rotated using the formula $Y=1$ when $x_{1}>x_{0}\cdot\mathrm{tan}(\pi\Delta/180)$ and $0$ otherwise, where $\Delta$ represents the degree of the rotation. Datasets in the third group (No.8,9) are generated according to 2D uncorrelated multivariate Normal distribution with standard deviation 1. The class label $Y=0$ if the points are sampled with mean $\mu=0$, and $Y=1$ if sampled with mean $\mu=\Delta$, where $\Delta$ represents the offset of the mean change, which results in classification boundary shift.

The tests are performed in a batch-by-batch manner, which is also known as the \emph{sliding window strategy}. The batch size (or window size) is set to 1000. The false alarm threshold, aka. false positive rate (FPR), is set to 5\% for both methods for a fair comparison. For our method, two different neighbor numbers $k=1,5$ are used to demonstrate the impact of choosing $k$ on different datasets. The permutation size for the CM method is set to $500$ as recommended in their original work \cite{lu14cmp_c}.

The results are shown in Table \ref{tab:ExpSynthetic}. As can be seen, the proposed method achieved higher detection accuracy on all test datasets. For the linear shift drift (No.1\textasciitilde 5), in particular, our method outperforms CM by a significant margin. This indicates that our method is very sensitive to small boundary shifts, whereas CM almost fails in such cases. We can also see that for the linear rotation tests (No.6,7), the advantageous margin of our method decreased. This is because when the classification boundary is rotating, one part of the sample points will be retreating while the other part of the sample point will be invading, which results to an extent in a canceling effect when counting points in the search step. It also shows that a smaller value of $k$ will result in better performance in this case, since the canceling effect will be more quickly overrun as the degree of rotation continues to increase. Lastly, by comparing the overall result of the proposed method with different $k$ values, we note that when the datasets have a clear classification boundary (No.1\textasciitilde 7), i.e., points of different classes are not overlapped, smaller $k$ leads to better accuracy, and when the classification boundary is not clear, or overlapped (No.8,9), a larger $k$ leads to better accuracy. This phenomenon is as expected because when the classification boundary is overlapped, a larger $k$ will cause the search step to cover a larger region of the classification boundary, thus capturing more information about boundary difference. In contrast, when the classification boundary is not overlapped, a larger search step will contain a larger region where there is no difference, thus reducing detection sensitivity. This property also implies that it is valid to use different values of $k$ to monitor retreat and invasion. This could be the case when one demands different priorities for retreat and invasion, or a different detection sensitivity for them.

\begin{center}
\begin{table}
\caption{Comparison of concept drift detection accuracy (detection rate/false alarm rate) between the proposed method (NSD) and the competence model-based method (CM). \label{tab:ExpSynthetic}}

\centering{}%
\begin{tabular}{ccccc}
\toprule
\textbf{No.} & \textbf{Drift Type} & \textbf{NSD (k=1)} & \textbf{NSD (k=5)} & \textbf{CM}\tabularnewline
\midrule
1 & Linear Shift 2D (0.03) & \textbf{99/3} & 98/4 & 11/3\tabularnewline
2 & Linear Shift 2D (0.02) & \textbf{98/5} & 71/4 & 10/7\tabularnewline
3 & Linear Shift 2D (0.01) & \textbf{56/4} & 20/4 & 6/7\tabularnewline
4 & Linear Shift 4D (0.03) & \textbf{88/3} & 58/0 & 23/5\tabularnewline
5 & Linear Shift 10D (0.03) & \textbf{43/1} & 23/1 & 7/7\tabularnewline
6 & Linear Rotate 2D $15^{\circ}$ & \textbf{99/7} & 73/3 & 94/8\tabularnewline
7 & Linear Rotate 2D $10^{\circ}$ & \textbf{98/2} & 23/4 & 58/6\tabularnewline
8 & Normal Shift 2D 0.7 & 93/3 & \textbf{99/3} & 87/5\tabularnewline
9 & Normal Shift 2D 0.5 & 64/3 & \textbf{90/5} & 46/3\tabularnewline
\bottomrule
\end{tabular}
\end{table}
\par\end{center}
\vspace{-0.7cm}
\subsubsection*{Experiment 10. Efficiency}

In this experiment, we extend the previous experiment to measure and compare the running times of our method and the competence model-based method (CM) \cite{lu14cmp_c}. The datasets used are uniform(0,1) hyper-cube with a linear shifting classification boundary, similar to the experiment above. Different dimensions and window sizes are used to demonstrate their impact on both algorithms.

The running times are obtained in a server environment with Intel Xeon 2.40GHz CPU, 256GB memory and 64bit Red Hat Linux Operating System. The programs are implemented in Python 2.7 with numpy, scipy library stack. We deliberately avoid parallel computing and use only single threaded implementation for easier performance analysis.

The result is shown in Table \ref{tab:ExpEfficiency}. As can be seen, the overall performance of our method is approximately three magnitudes better (3000 times) than CM. The major reason for this is that we do not need a permutation procedure since the significance level of our statistic is directly calculated from the formula. For CM, however, permutation is mandatory for estimating the significance level, which introduces an overhead of repeatedly computing the statistic hundreds of times (500 in this experiment as recommended by the author). Being aware that this overhead can be alleviated to a certain extent by parallel computing, we also compare our method to CM without counting their permutation cost and only measure the time to compute their statistic once (CM w/o Perm). The results show that the computation of our statistic is still 5\textasciitilde 10 times faster than that of CM. This indicates that our method is much more efficient and applicable for real-time stream learning. And our algorithm is barely affected by the number of dimensions and the number of neighbors, but is sensitive to window size.

\begin{table}
\caption{Comparison of the running times (in seconds, unless indicated otherwise) of different dimensions and window sizes for the proposed method (NSD) and the competence model-based method (CM).\label{tab:ExpEfficiency}}
\centering
\begin{tabular}{cccccc}
\toprule
\textbf{Dim} & \textbf{Size} & \textbf{NSD (1)} & \textbf{NSD (5)} & \textbf{CM} & \textbf{CM w/o Perm}\tabularnewline
\midrule
4 & 5000 & 1.15 & 1.23 & 1h3ms & 7.62\tabularnewline
6 & 5000 & 1.36 & 1.25 & 1h5m & 7.91\tabularnewline
10 & 5000 & 1.53 & 1.6 & 1h16m & 9.21\tabularnewline
10 & 10000 & 5.72 & 6.03 & 5h57m & 42.84\tabularnewline
10 & 15000 & 12.7 & 13.25 & 12h20m & 88.85\tabularnewline
10 & 20000 & 22.41 & 23.19 & 21h43m & 156.39\tabularnewline
\bottomrule
\end{tabular}
\end{table}

\subsubsection*{Experiment 11. Classification task on real-world datasets}

In this experiment, we apply the proposed concept drift method in typical machine learning classification tasks on real-world stream (simulated) data to demonstrate the performance of our method against state-of-the-art drift adaptation methods.

The five datasets we use are top referenced benchmarks for concept drift problems which provide great variety in dimension (10\textasciitilde 500), data types (numerical, categorical, binary) and data size (1K\textasciitilde 50K). The \textbf{Airline dataset} \cite{bifet2010moa} consists of arrival and departure information for commercial flights of US airports, from October 1987 to April 2008. Each data record has 7 features and the classification task is to predict if the flight is delayed or not delayed. The original dataset includes a total of 539,388 records of which 50,000 is used in our experiment for efficiency. The \textbf{Spam Filtering dataset} \cite{katakis2008spam} includes 9,324 email messages collected from the Spam Assassin system (\href{https://spamassassin.apache.org/}{https://spamassassin.apache.org/}). The dataset consists of 500 attributes which are extracted from the original 39,916 attributes using Chi-square feature selection technique. The classification task is to predict spam emails (around 20\%) from legitimate emails. This dataset typically includes gradual concept drift. The \textbf{Usenet1 and Usenet2} \cite{katakis2008usenet} are two datasets from the UCI Machine Learning Repository. Both include 1,500 instances collected from the Twenty Newsgroups. Each instance consists of 99 features representing a message from one of three topics: medicine, space and baseball, and is labeled by a user as interesting or not interesting. Drift in user interest is introduced every 300 messages. Finally, the \textbf{Weather dataset} \cite{elwell11increm} contains weather data from Offutt Air Force Base in Bellevue, Nebraska over 50 years. Daily measurements were taken for a variety of features such as temperature, pressure, visibility, and wind speed. Eight features are used and the classification task is to predict whether rain precipitation was observed on each day. The dataset contains 18,159 instances with 31\% positive (rain) classes, and 69\% negative (no-rain) classes. This dataset contains not only cyclical seasonal changes, but also possible long-term climate change.

\begin{table}
\scriptsize
\caption{Parameters of concept drift handling methods.} \label{tab:ExpMethodParameter}

\begin{tabular}{p{0.05\textwidth}p{0.065\textwidth}p{0.3\textwidth}}
\toprule
\textbf{Category} & \textbf{Method} & \textbf{Parameters}\tabularnewline
\midrule
Our &NSD & $k=1$, window size=50\tabularnewline

\midrule
Drift &HDDM-A & warning level=0.005, drift level=0.001, two-sided\tabularnewline
Detection &HDDM-W & warning level=0.005, drift level=0.001, lambda=0.05, one-sided\tabularnewline
&ADWIN & delta=0.002\tabularnewline
&CM & permutation=500, window size=50\tabularnewline

\midrule
Drift &SAMkNN & STM size=50, LTM size=20\tabularnewline
Adaptation&HAT & split confidence=0, tie threshold=0.05\tabularnewline
 &AUE2 & member count=10, chunk size=50\tabularnewline
&Learn++NSE & period=50, sigmoid slope=0.5, sigmoid crossing point=10, ensemble size=15\tabularnewline
&ARF & ensmble size = 15, window size = 50\tabularnewline
&SRP & ensmble size = 15, window size = 50\tabularnewline
&AMF & ensmble size = 15, window size = 50\tabularnewline
&IWE & ensmble size = 15, window size = 50\tabularnewline
\bottomrule
\end{tabular}
\end{table}

\begin{table*}
\begin{centering}
\caption{Classification accuracy of concept drift detection methods on real-world datasets with different base learners. The rank of the accuracy on each dataset is shown in brackets. The average rank indicates the overall performance of each algorithm (lower is better).\label{tab:ExpReal}}
\setlength{\tabcolsep}{1.2mm}{
\begin{tabular}{ccccccccccc}
\toprule
\multirow{1}{*}{Algorithms} & Learner & \textbf{Airline} & \textbf{Spam} & \textbf{Usenet1} & \textbf{Usenet2} & \textbf{Weather} & Avg. Score & Avg. Score (all) & Avg. Rank & Avg. Rank (all) \\
\midrule

\multirow{3}{*}{NSD} & NB & 69.55 (2) & 91.62 (7) & 74.93 (4) & 72.93 (3) & 69.21 (23) & 75.64 (2) & \multirow{3}{*}{\textbf{75.20 (1)}} & \textbf{7.8 (1)} & \multirow{3}{*}{\textbf{9.3 (1)}} \\

& kNN & \textbf{69.65 (1)} & 90.78 (14) & 67.80 (14) & 68.67 (17) & 75.81 (6) & 74.60 (9) & & 10.4 (5)  & \\

& HTree & 66.66 (16) & 91.15 (10) & \textbf{75.20 (1)} & 72.40 (5) & 71.40 (17) & 75.36 (4) & & 9.8 (4) &\\
\midrule

\multirow{3}{*}{HDDM-A} & NB & 68.96 (4) & 90.94 (12) & 75.18 (2) & 71.00 (11) & 72.34 (14) & \textbf{75.68 (1)} & \multirow{3}{*}{74.99 (2)} & 8.6 (2) & \multirow{3}{*}{9.9 (2)}\\

& kNN & 69.56 (6) & 89.91 (18) & 69.13 (11) & 67.87 (18) & 76.41 (4) & 74.37 (11) & & 11.4 (7) & \\

& HTree & 67.55 (9) & 91.92 (3) & 74.60 (5) & 68.87 (16) & 71.61 (16) & 74.91 (8) & & 9.8 (4) & \\
\midrule

\multirow{3}{*}{HDDM-W} & NB & 67.19 (11) & 91.52 (9) & 75.07 (3) & 70.93 (12) & 72.76 (12) & 75.49 (3) & \multirow{3}{*}{74.74 (3)} & 9.4 (3) & \multirow{3}{*}{11.4 (4)}\\

& kNN & 66.89 (14) & 90.19 (16) & 68.2 (13) & 67.67 (19) & 75.62 (7) & 73.71 (13) & & 13.8 (14) & \\

& HTree & 66.56 (17) & 91.76 (5) & 74.37 (6) & 70.00 (14) & 72.37 (13) & 75.01 (6) & & 11 (6) & \\
\midrule

\multirow{3}{*}{ADWIN} & NB & 68.69 (5) & 91.90 (4) & 67.00 (15) & 72.27 (6) & 70.31 (22) & 74.03 (12) & \multirow{3}{*}{73.32 (6)} & 10.4 (5) & \multirow{3}{*}{11.3 (3)} \\

& kNN & 69.35 (3) & 91.64 (6) & 58.53 (23) & 67.13 (21) & 76.31 (5) & 72.59 (17) & & 11.6 (8) & \\

& HTree & 67.78 (7) & 91.58 (8) & 64.47 (17) & 72.00 (7) & 70.89 (21) & 70.89 (14) & & 12 (9) & \\
 \midrule
 
\multirow{3}{*}{CM} & NB & 66.41 (18) & 91.05 (11) & 70.90 (8) & \textbf{76.91 (1)} & 70.97 (20) & 75.24 (5) & \multirow{3}{*}{74.47 (5)} & 11.6 (8) & \multirow{3}{*}{12.5 (6)} \\

& kNN & 67.28 (10) & 90.11 (17) & 65.33 (16) & 71.30 (9) & 71.88 (15) & 73.18 (15) & & 13.4 (12) & \\

& HTree & 66.12 (20) & 90.72 (15) & 71.24 (7) & 75.87 (2) & 71.06 (19) & 75.00 (7) & & 12.6 (10) & \\
 \midrule
 
\multirow{3}{*}{Learn++NSE} & NB & 63.60 (23) & 69.32 (24) & 70.13 (10) & 65.00 (25) & 68.40 (25) & 67.29 (25) & \multirow{3}{*}{68.33 (13)} & 21.4 (20) & \multirow{3}{*}{18.4 (12)}\\

& kNN & 67.75 (8) & 72.17 (22) & 64.00 (19) & 67.13 (21) & 73.47 (10) & 68.90 (22) & & 16 (17) & \\

& HTree & 66.18 (19) & 68.57 (25) & 70.20 (9) & 70.67 (13) & 68.43 (24) & 68.81 (23) & & 18 (19) & \\
\midrule

SAMkNN & kNN & 65.06 (22) & \textbf{96.26 (1)} & 64.47 (17) & 71.07 (10) & 75.54 (8) & 74.48 (10) & 74.48 (4) & 11.6 (8) & 11.6 (5) \\
\midrule

AUE2 & HTree & 66.67 (15) & 72.16 (23) & 68.40 (12) & 72.70 (4) & 73.78 (9) & 70.74 (21) & 70.74 (11) & 12.6 (10) & 12.6 (7) \\

HAT & HTree & 65.65 (21) & 90.81 (13) & 63.93 (20) & 71.87 (8) & 73.19 (11) & 73.09 (16) & 73.09 (7) & 14.6 (15) & 14.6 (9)\\

ARF & HTree & 66.90 (13) & 89.78 (20) & 59.66 (21) & 67.31 (20) & 77.99 (3) & 72.32 (18) & 72.32 (8) & 15.4 (16) & 15.4 (10) \\

SRP & HTree & 67.18 (12) & 89.85 (19) & 53.45 (24) & 66.28 (24) & 78.46 (2) & 71.04 (20) & 71.04 (10) & 16.2 (18) & 16.2 (11) \\

AMF & HTree & 59.10 (25) & 94.35 (2) & 59.66 (21) & 69.24 (15) & \textbf{78.66 (1)} & 72.20 (19) & 72.20 (9) & 12.8 (11) & 12.8 (8) \\

IWE & HTree & 62.80 (24) & 88.82 (21) & 53.10 (25) & 66.69 (23) & 71.34 (18) & 68.55 (24) & 68.55 (12) &  22.2 (21) & 22.2 (13) \\

\bottomrule
\end{tabular}}
\par\end{centering}
\end{table*}

We compare our method with 12 representative works of different strategies of concept drift handling, including three error rate monitoring-based methods - \textbf{HDDM-A}, \textbf{HDDM-W} \cite{frias2015hddm}, \textbf{ADWIN} \cite{bifet2007adwin}; two instance-based methods - \textbf{CM} \cite{lu14cmp_c}, \textbf{SAMkNN} \cite{losing2016samknn}; one adaptive decision tree method - \textbf{HAT} \cite{bifet2009hat}; and six ensemble methods - \textbf{AUE2} \cite{brzezinski14aue2}, \textbf{Learn++NSE} \cite{elwell11increm}, \textbf{Adaptive Random Forest (ARF)} \cite{Gomes2017Adaptive}, \textbf{Streaming Random Patches (SRP)} \cite{gomes2019streaming}, \textbf{Aggregated Mondrian Forest Classifier (AMF)} \cite{mourtada2021amf}, and \textbf{Incremental Weighted Ensemble (IWE)} \cite{jiao2022incremental}. For the proposed method (NSD), and the other detection methods HDDM-A, HDDM-W, ADWIN and CM, a test-and-retrain strategy is used in order to evaluate their final classification accuracy. This is not required for SAMkNN, HAT and the ensemble methods since they use their own mechanisms to maintain classifiers. Also, most of these methods, including ours, can be used in combination with different base learners. Therefore, for a well-rounded evaluation, when applicable, the algorithms are implemented with three learners, namely \textbf{kNN}, \textbf{Na\"ive Bayes (NB)} and \textbf{Hoeffding Tree (HTree)} classifiers. The exceptions are SAMkNN, which is only compatible with kNN, as well as AUE2, HAT, ARF, SRP, AMF, and IWE, which are packaged with the Hoeffding Tree.

The algorithms are implemented based on the MOA platform \cite{bifet2010moa} using Java. And some ensemble-based drift adaptation methods are implemented by River \cite{montiel2021river} using Python 3.8. The parameters settings are listed in Table \ref{tab:ExpMethodParameter}. For a fair comparison, unless indicated otherwise, the window size is set to 50 as default; the warning level of all tests is set to 0.05 and the drift level is 0.01; the parameter for the kNN learners is set to k=10; the ensemble size is set to 15 as default; and the distance function is set to Euclidean if needed.

The results are listed in Table \ref{tab:ExpReal}. Since no single algorithm outperforms all the others in all cases, we use a ranking method similar to that in \cite{losing2016samknn} to better demonstrate the overall improvements made by the proposed method. First, the classification accuracy of all the methods is ranked for each dataset. After this, the average ranking of each method is calculated and forms the final ranking, which indicates the overall performance of each method in all scenarios. From the results, we see that the proposed method has the best overall performance and the top accuracy on two individual datasets. Additionally, based on an closer analysis, the following observations can be made: 1) the error rate-based methods (HDDM-A/W, ADWIN) achieve relatively good results compared to most methods of other strategies, which manifests their advantages in practice, considering their computational simplicity and efficiency; 2) the instance-based methods (NSD, SAMkNN and CM) achieve best performance on four of the five datasets. This shows that concept drift detection based on instances is sensitive in different scenarios and is a promising strategy to tackle the problem; 3) the choice of base learners plays an important role in the final classification accuracy. For example, for every algorithm, the kNN learner always has the best result on the Weather dataset but the worst result on the Usenet datasets. This indicates that, in addition to an accurate detection method, choosing the right learner is also a consideration of great importance in handling concept drift problems.

\section{Conclusion and future studies\label{sec:Conclusion}}
Traditional error rate-based methods are designed to detect real concept drift but fall short of describing the drift, thus cannot be used to update existing models, whereas distribution test-based methods inherently cannot avoid detecting virtual drift and are often computationally expensive for real-time online tasks. In this work, we overcame these limitations by adopting a new perspective on the nearest neighbor problem and introduced a novel concept, \emph{nearest neighbor searching}. Based on this concept, we developed a series of statistics, including \emph{neighbor-searching volume}, \emph{neighbor-searching volume ratio} and the core measure - \emph{neighbor-searching discrepancy} (NSD). We provided both theoretical analysis and empirical verification to demonstrate the robustness of this measure in various application scenarios, as well as its appealing properties such as distribution-free, unrelated to starting point location and neighbor-searching volume size, and not requiring independence for the joint statistics.

An efficient real concept drift detection method was developed based on NSD. The proposed method: 1) focuses on detecting classification boundary change (real drift) and ignores virtual drift that does not affect the classification boundary. As a result, higher test power is gained compared to distribution tests. This also means that our method is able to detect real drift without relying on particular classifiers, unlike error-based methods; 2) is computationally efficient, since the significance threshold of the underlying statistic can be calculated directly from the formula, without relying on re-sampling methods such as permutation or bootstrap. This makes our method applicable for real-time tasks; 3) is able to describe real concept drift as classification boundary retreat or invasion of each class, which is an appealing feature for understanding concept drift. Based on this information, a classification boundary change can also be described from other perspectives, such as the widening/narrowing of the classification gap or the separability of the two classes.

Our next attempt will aim to develop concept drift adaptation models based on the output of the proposed detection method and utilize the novel information of retreat and invasion. Another possible improvement may be achieved by identifying and dividing the classification boundary into different sections according to the data distribution, thus completely avoiding the canceling effect of boundary rotation. Lastly, because the concepts and statistics proposed in this work are general and robust, other classification-related methods, especially KNN-based methods, may take advantage of these results and improve their performance from new perspectives.

\section*{Acknowledgments}
The work presented in this paper was supported by the Australian Research Council (ARC) under Grant FL190100149. We would like to thank Drs Anjin Liu, Feng Liu, Yiliao Song and Dan Shang for their advice on this work. We also wish to thank the anonymous reviewers for their helpful comments.

\appendices
\section*{\centerline{Appendix A}}

\subsection*{Proof of Lemma 1}

\begin{proof}
Since the probability density function (PDF) is derived as
\begin{equation}
\mathbb{P}'(V(k)\le v)=\mathrm{binom}(k,n,\frac{\lambda v}{n})\cdot\frac{k}{v}\label{eq:VolumePDFbinom}.
\end{equation}
Notice that when $n\gg k$, $\underset{n\rightarrow\infty}{\lim}\frac{k}{n}=0$, thus $\underset{n\rightarrow\infty}{\lim}\frac{n!}{(n-k)!n^{k}}=1$ and $\underset{n\rightarrow\infty}{\lim}(1-\frac{\lambda v}{n})^{n-k}=e^{-\lambda v}$. 
This equation can be formulated as
\begin{align*}
f(v) & =\frac{n!}{(n-k)!n^{k}}\Big(1-\frac{\lambda v}{n}\Big)^{n-k}\cdot\frac{\lambda^{k}v^{k-1}}{(k-1)!}\\
 & =e^{-\lambda v}\cdot\frac{\lambda^{k}v^{k-1}}{(k-1)!},
\end{align*}
which is a Gamma distribution with shape parameter $k$ and scale parameter $\lambda$.
\end{proof}

\subsection*{Proof of Lemma 2}

\begin{proof}
Given two independent random variables $X\sim\mathcal{V}(k_{1},n,\lambda)$, $Y\sim\mathcal{V}(k_{2},n,\lambda)$, $k_{1}>0$, $k_{2}>0$, $\lambda>0$. If $n\gg k_{1}$ and $n\gg k_{2}$, according to Lemma 1, $X\sim\Gamma(k_{1},\lambda),Y\sim\Gamma(k_{2},\lambda)$. Let $U=X+Y$ and $V=X/(X+Y)$. The joint probability of $(X,Y)$ has PDF
\begin{align*}
f(x,y) & =\frac{\lambda^{k_{1}}}{\Gamma(k_{1})}x^{k_{1}-1}e^{-\lambda x}\frac{\lambda^{k_{2}}}{\Gamma(k_{2})}y^{k_{2}-1}e^{-\lambda y}\\
& =\frac{\lambda^{k_{1}+k_{2}}}{\Gamma(k_{1})\Gamma(k_{2})}x^{k_{1}-1}y^{k_{2}-1}e^{-\lambda(x+y)}.
\end{align*}
Since $x=uv$, $y=u(1-v)$, the absolute value of the Jacobian of $(U,V)$ and $(X,Y)$ is
\[
\left|\det\frac{\partial(x,y)}{\partial(u,v)}\right|=u,
\]
the PDF of $(U,V)$ is thus
\begin{align*}
g(u,v) & =\frac{\lambda^{k_{1}+k_{2}}}{\Gamma(k_{1})\Gamma(k_{2})}(uv)^{k_{1}-1}[u(1-v)]^{k_{2}-1}e^{-\lambda u}u\\
 & =\frac{\lambda^{k_{1}+k_{2}}}{\Gamma(k_{1})\Gamma(k_{2})}u^{k_{1}+k_{2}-1}e^{-\lambda u}v^{k_{1}-1}(1-v)^{k_{2}-1}\\
 & =\frac{\lambda^{k_{1}+k_{2}}}{\Gamma(k_{1}+k_{2})}u^{k_{1}+k_{2}-1}e^{-\lambda u}\frac{\Gamma(k_{1}+k_{2})}{\Gamma(k_{1})\Gamma(k_{2})}v^{k_{1}-1}(1-v)^{k_{2}-1}.
\end{align*}
According to the factorization theorem, $v=x/(x+y)=R_{(k_{1},k_{2})}$ has the PDF $\mathrm{Beta}(k_{1},k_{2})=\frac{\Gamma(k_{1}+k_{2})}{\Gamma(k_{1})\Gamma(k_{2})}v^{k_{1}-1}(1-v)^{k_{2}-1}$.
\end{proof}

\section*{\centerline{Appendix B}}
\subsection*{Proof of Theorem 1}
~~~~~~The basic idea for us to prove Theorem 1 is that we seek a good transformation $T$ to map distribution ${P}$ into a manifold space, where the new distribution $T_{\#}P$ becomes an uniform distribution. Then we compute $\mathbb{P}(R_{(k_{1},k_{2})}<0.5)$ in the new space. Finally, we prove the value of $\mathbb{P}(R_{(k_{1},k_{2})}<0.5)$ in the new space is same with the value of $\mathbb{P}(R_{(k_{1},k_{2})}<0.5)$ in $\mathbb{R}^d$.

Before proving our main result (Theorem 1), we need to extend the definitions of neighbour searching and $k$th nearest neighbor to the Riemannian manifold space $M^d$.

We use Riemannian manifold $M^d$ to replace $\mathbb{R}^d$.
There are two benefits. 1). $\mathbb{R}^d$ is just
a special case of manifold. Therefore, our definition about
nearest neighbor can handle more complicated situations.
2) The proving technique of our main theorem depends
on differential geometry theory. Using manifold to define
nearest neighbors will be helpful for us to prove our main
theorem.
\begin{defn}[Neighbor Searching]
Given a Riemannian manifold $M^d$, define a\emph{ (neighbor) search} ${S}$ over $M^d$ to be a sequence of measurable sets $\{S_{i}\}$, called \emph{search steps}, with the following properties:

1) $S_{0}=\{x^{1},x^{2},\ldots,x^{m}\}$, where $x^i \in M^d$, is the initial search step containing $m$ \emph{starting points};

2) $S_{i}\subset S_{j}$ if $i<j$, where $i,j\in\mathcal{I}$;

3) for any $i<j,i,j\in\mathcal{I}$, we have $V(S_{i})<V(S_{j})<V(M^d)$;

4) $\sup_{i\in \mathcal{I}} V(S_i))=V(M^d)$;

5) For any $v \in [0,V(M^d))$, there exists $i \in \mathcal{I}$ such that $V(S_i)=v$;

where $\mathcal{I}=[0,+\infty)$ is the index set.
\label{def:NeighborSearch}
\end{defn}

\begin{defn}[$k$th Nearest Neighbor]
  Given a Riemannian manifold $M^d$, continuous distribution $P$ neighbor searching $S$, and samples $X=\{x_1,...,x_n\}$ distributed by $P$, we say a set $S{(k)}\in S$ is the $k$th (nearest) neighbor if
  
  1) there exists an index $i\in \mathcal{I}$ such that $S{(k)}=S_i$;

  2) the number of samples in $S_i$ is $k$ ($\# X \bigcap S_i=k$);

  3) the number of samples in $S_j$ is small than k ($\# X \bigcap S_j<k$), where $j<i$ and $j\in \mathcal{I}$.
  
   The $k$th (nearest) neighbor-searching volume, denoted as $V(k) = V (S(k))$, is the volume of the $k$th nearest neighbor set $S(k)$.
\label{def:NearestNeighbor}
\end{defn}

\begin{defn}[Neighbor Search Restricted in Subset]
Given an open set $A \subset M^d$ and neighbor search $S=\{S_i\}_{i\in \mathcal{I}}$, a family of set $S_A$ is called neighbor search restricted in $A$, if $S_A=\{S_i\bigcap A\}_{i\in \mathcal{I}}$ .
\end{defn}

\begin{prop}\label{0.}
Given a manifold $M^d$, an open set  $A \subset M^d$ and  neighbor search $S=\{S_i\}_{i\in \mathcal{I}}$, then for neighbor search restricted in $A$: $S_A=\{S_i\bigcap A\}_{i\in \mathcal{I}}$,  any  $v \in [0,\sup_{i\in \mathcal{I}} V(S_i\bigcap A))$, there exists $i \in \mathcal{I}$ such that $V(S_i\bigcap A)=v$.
\end{prop}

\begin{proof}
1)  If $v=0$, then we choose $i=0$.

2)  If $0<v<\sup_{i\in \mathcal{I}} V(S_i\bigcap A)$, because $V(S_0\bigcap A)=0$ and $\lim_{i\rightarrow \infty}V(S_i\bigcap A)=\sup_{i\in \mathcal{I}} V(S_i\bigcap A)$, then there exist $\alpha$ and $\beta$ such that

\begin{equation}
V(S_{\alpha}\bigcap A)<v<V(S_{\beta}\bigcap A),
\end{equation}

Then we choose $\rho_0=\frac{\alpha+\beta}{2}$, if $V(S_{\rho_0}\bigcap A)\leq v$, we set $\alpha_0=\rho_0$; otherwise, $\beta_0=\rho_0$. We define $\rho_i=\frac{\alpha_i+\beta_i}{2}$, $i\in \mathbb{Z}_{\geq 0}$. If $V(S_{\rho_i}\bigcap A)\leq v$, we set $\alpha_{i}=\rho_i$; otherwise, $\beta_i=\rho_i$. Then we obtain  interval sequences
$\{[\alpha_i,\beta_i]\}_{i=0}^{\infty}$ such that

1) $\lim_{i\rightarrow \infty}\beta_i-\alpha_i=0$;

2) $\alpha_{i}\leq \alpha_{i+1}$ and $\beta_{i+1}\leq \beta_{i}$.
Then according to Closed Interval Theorem, there exists $t$ such that $\lim_{i\rightarrow \infty}\beta_i=\lim_{i\rightarrow \infty} \alpha_i=t$. 

Then we claim that $V(S_t \bigcap A)=v$. 
Because of properties 3) and 5) in Definition \ref{def:NeighborSearch}, we can proof that 

\begin{equation}\label{1.}
\lim_{i\rightarrow t} V(S_i)=V(S_t).
\end{equation}

Then

1)
\begin{equation*}
\lim_{i\rightarrow \infty}V(S_{\alpha_i}\bigcap A)\leq v \leq \lim_{i\rightarrow \infty}V(S_{\beta_i}\bigcap A);
\end{equation*}

2)
\begin{equation*}
\lim_{i\rightarrow \infty}V(S_{\alpha_i}\bigcap A)\leq V(S_{t}\bigcap A) \leq \lim_{i\rightarrow \infty}V(S_{\beta_i}\bigcap A);
\end{equation*}

3)
\begin{equation*}
V(S_{\beta_i}-S_{\alpha_i})\geq
V(S_{\beta_i}\bigcap A) - V(S_{\alpha_i}\bigcap A)\geq 0;
\end{equation*}

4) According to formula $(\ref{1.})$,
\begin{equation*}
\lim_{i\rightarrow \infty}V(S_{\beta_i}-S_{\alpha_i})=0.
\end{equation*}

1), 2), 3) and 4) imply that  $V(S_{t}\bigcap A)=v$.
\end{proof}

Given a continuous distribution in manifold $M^d$:
\begin{equation}
P(U)=\int_{M^d\bigcap U}f dV,
\end{equation}
where $U$ is any Borel set and $f$ is the continuos function in $M^d$.

\begin{defn}[$k$th Nearest Neighbor Restricted in Subset]
Given manifold $M^d$, continuous distribution $P$ neighbor search $S$, open set $A \subset M^d$ and samples $X=\{x_1,...,x_n\}$ distributed by $P$, we say a set $S_A{(k)}\in S_A$ is the $k$th (nearest) neighbor if

1) there exists an index $i\in \mathcal{I}$ such that $S_A{(k)}=S_i\bigcap A$;

2) the number of samples in $S_i\bigcap A$ is k ($\# X \bigcap (S_i\bigcap A)=k$);

3) the number of samples in $S_j$ is small than k ($\# X \bigcap ( S_j\bigcap A)<k$), where $j<i$ and $j\in \mathcal{I}$.

The $k$th (nearest) neighbor search volume, denoted as $V_A(k) = V (S_A(k))$, is the volume of the $k$th nearest neighbor set $S_A(k)$.
\end{defn}

Given samples $X_1=\{x^1_1,...,x^1_{n_1}\}$ and $X_2=\{x^1_1,...,x^1_{n_2}\}$ which are distributed by $P$. We denote $V({k_1})$ as the volume $k_1$th nearest neighbor  according to samples $X_1$ and $V({k_2})$ as the volume $k_2$th nearest neighbor  according to samples $X_2$. Then we prove that

\begin{prop}\label{4.}
\begin{equation*}
\mathbb{P}(V(k_1)< V(k_2))=\mathbb{P}(V_A(k_1)< V_A(k_2)),
\end{equation*}
where the set $A=\{x\in M^d: f(x)>0\}$.
\end{prop}

\begin{proof}
We only need to prove that
\begin{equation}
V(k_1)< V(k_2) \leftrightarrow  V_A(k_1)< V_A(k_2), a.e.;
\end{equation}
Because 
\begin{equation}
\begin{split}
&~~~~~~V(k_1)< V(k_2)\\& \leftrightarrow \# S(k_1)\bigcap X_2 <k_2\\&\leftrightarrow\# S(k_1)\bigcap (X_2\bigcap A) <k_2, a.e.\\& \leftrightarrow V_A(k_1)< V_A(k_2), a.e.;
\end{split}
\end{equation}
here we use $X_2\subset A, a.e.$
\end{proof}

Now we prove the main theorem of the paper. For proving Theorem 1, we need to introduce some lemmas.

\begin{lem}\label{2.}
If $M^d=\mathbb{R}^d$ and the density function $f$ is a $C^k$ functions in open set $A=\{x\in M^d: f(x)>0\}$ ($k>2$), then there exists a $C^k$ homeomorphism $T$ to make sure that

1) the image $T(A)$ is a $C^k$ manifold $N^d$;

2) $T_{\#}(P)(U)=\int_{N^d\bigcap U}1dV$, where $U$ is any Borel set in $N^d$.
\end{lem}

This lemma tells us that for any $C^k$ distribution ($k>2$), there exists a $``good"$ transformation $T$ to map the distribution into an uniform distribution in a new space $N^d$.

\begin{proof}
In the open set $A$, for any Borel set $U$,
\begin{equation}
P(U)=\int_{A\bigcap U} f dx;
\end{equation}
First, we assume there exists such map $T$ to make sure $T_{\#}(P)(T(U))=\int_{N^d\bigcap T(U)}1dV$,
then we try to find what will happen.

1)
\begin{equation}
P(U)=T_{\#}(P)(T(U)),
\end{equation}
which implies that

2)
\begin{equation}
\int_{A\bigcap U} f dx=\int_{N^d\bigcap T(U)}1dV
\end{equation}
We use area formula and obtain that
\begin{equation}
\int_{A\bigcap U} f dx=\int_{N^d\bigcap T(U)}1dV=\int_{A\bigcap U}\sqrt{G}dx,
\end{equation}
where $G$ is $\left|det[\frac{\partial T}{\partial x}]\right|^2$.
Hence, 
\begin{equation}
f^2=G.
\end{equation}
Therefor, for any homeomorphism $T$, if $T$ satisfies $f^2=G$, then the density function of the new distribution after distribution $T$ is 1.

We use the Nash embedding theorem \cite{nash1956imbedding} to help us find such transformation $T$ which satisfies $f^2=G$.

In $A$, we need to define a Riemannian metric $[g_{ij}]=f^{\frac{2}{d}}I_{d\times d}$, where $I_{d\times d}$ is the $d\times d$ identity matrix.
Then $(A,[g_{ij}])$ is an abstract $C^k$ Riemannian manifold. According to Nash embedding theorem, we know that there exists a $C^k$ homeomorphism $T$ embedding $A$ into a new space $\mathbb{R}^K$  ($K\leq d^2+5d+3$), then $T(A)$ is the new manifold $N^d$ we want to find.
\end{proof}

Until now, according to proposition \ref{0.} and lemma \ref{2.}, we can prove 

\begin{lem}
Given $C^k$ denstity f in $A\subset \mathbb{R}^d$ and $C^k$ homeomorphism $T$ ($T: A\rightarrow N^d$), we define a volume map 
\begin{equation}
T_f: [0,\sup_{i\in \mathcal{I}} \mathcal{L}^d(S_i\bigcap A))\rightarrow [0,\sup_{i\in \mathcal{I}} V(T(S_i\bigcap A))),
\end{equation}
for any $v\in [0,\sup_{i\in \mathcal{I}} \mathcal{L}^d(S_i\bigcap A))$, we choose $S_k\in S$ such that $\mathcal{L}^d(S_k\bigcap A)=v$, and set $T_f(v)=V(T(S_k\bigcap A)))$, then the map $T_f$ is well defined, monotone bijective map.
\end{lem}

Now we can use lemma \ref{2.} to help us compute $\mathbb{P}(V_A(k)>v)$.

\begin{thm}\label{6.}
Given $C^k$ denstity f in $A\subset \mathbb{R}^d$ and $C^k$ homeomorphism $T$ ($T: A\rightarrow N^d$),
\begin{equation}
\mathbb{P}(V_A(k)>v)=\sum_{i=0}^{k-1} \binom{n}{i}\Big(T_f(v)\Big)^{i}\Big(1-T_f(v)\Big)^{n-i},
\end{equation}
where $v\in  [0,\sup_{i\in \mathcal{I}} \mathcal{L}^d(S_i\bigcap A))$.
\end{thm}

\begin{proof}
According to lemma \ref{2.}, there exists a neighbor set $S_i$ such that $\mathcal{L}^d(S_i\bigcap A)=v$, then
\begin{equation}
\begin{split}
\mathbb{P}(V_A(k)>v)&=\mathbb{P}(\# (S_i\bigcap A) \bigcap X\leq k-1)\\&=\mathbb{P}(\# T(S_i\bigcap A) \bigcap T(X)\leq k-1)
\\&=\sum_{i=0}^{k-1}\mathbb{P}(\# T(S_i\bigcap A) \bigcap T(X)=i)\\&=\sum_{i=0}^{k-1} \binom{n}{i}\Big(p\Big)^{i}\Big(1-p\Big)^{n-i},
\end{split}
\end{equation}
where $p=P(S_i\bigcap A)=\int_{N^d\bigcap T(S_i)}dV=V_{N^d}(T(S_i))=T_f(v)$.
Then
\begin{equation}
\mathbb{P}(V_A(k)>v)=\sum_{i=0}^{k-1} \binom{n}{i}\Big(T_f(v)\Big)^{i}\Big(1-T_f(v)\Big)^{n-i}
\end{equation}
\end{proof}

\begin{cor}\label{5.}
Under the same assumption of Theorem \ref{6.}, let $V_{N^d}(k)$ is the  volume in $N^d$ of set $T(S(k)\bigcap A)$, where $S_{k}$ is the $k$th nearest neighbour corresponding to samples $X$, then we have
\begin{equation}
\mathbb{P}(V_{N^d}(k)>v)=\sum_{i=0}^{k-1} \binom{n}{i}\Big( v\Big)^{i}\Big(1- v\Big)^{n-i}
\end{equation}
where $v\in [0,1)$.
Moreover, $V_{N^d}(k)'s$ density function of   $f_{V(k)}(v)$ is
\begin{equation}
\begin{split}
-\sum_{i=1}^{k-1}\binom{n}{i}\Big[iv^{i-1}(1- v)^{n-i}+v^i(n-i)(1- v)^{n-i-1}(-1)\Big]
\end{split}
\end{equation}
when $v\in [0,1)$;
$f_{V(k)}(v)=0$, when $v\geq 1$.
\end{cor}

Now we prove Theorem 1.
\begin{proof}[Proof for Theorem 1]
According to Lemma \ref{2.}, there exists a homeomorphism $T$ such that

1) the image $T(A)$ is a $C^k$ manifold $N^d$;

2) $T_{\#}(P)(U)=\int_{N^d\bigcap U}1dV$, where $U$ is any Borel set in $N^d$.

\textbf{Claim 1}
\begin{equation}
\begin{split}
&~~~~~\mathbb{P}(V(k_1)< V(k_2))\\&=\mathbb{P}(V_A(k_1)< V_A(k_2))\\&=\mathbb{P}(V_{N^d}(k_1)< V_{N^d}(k_2)),
\end{split}
\end{equation}
where $V_{N^d}(k_i)$ is the volume in $N^d$ of set $T(S(k_i)\bigcap A)$, here $S_{k_i}$ is the $k_i$th nearest neighbour corresponding to samples $X_i$.

We prove the Claim.
Firstly according to proposition \ref{4.}, $\mathbb{P}(V(k_1)< V(k_2))=\mathbb{P}(V_A(k_1)< V_A(k_2))$. So we only need to prove $\mathbb{P}(V_A(k_1)< V_A(k_2))=\mathbb{P}(V_{N^d}(k_1)< V_{N^d}(k_2))$

It is clear that
\begin{equation}
\begin{split}
&~~~~~~V_A(k_1)< V_A(k_2)\\&\leftrightarrow\# S(k_1)\bigcap (X_2\bigcap A) <k_2, \\& \leftrightarrow \# T(S(k_1)\bigcap A) \bigcap T(X_2) <k_2, \\&\leftrightarrow V_{N^d}(k_1)< V_{N^d}(k_2);
\end{split}
\end{equation}
Therefore, $\mathbb{P}(V_A(k_1)< V_A(k_2))=\mathbb{P}(V_{N^d}(k_1)< V_{N^d}(k_2))$.

So we just need to compute $\mathbb{P}(V_{N^d}(k_1)< V_{N^d}(k_2))$.

Let the density of $V_{N^d}(k)$ be $f_{V(k)}(v)$, where $v\geq 0$, then
\begin{equation}
\begin{split}
&~~~~~\mathbb{P}(V_{N^d}(k_1)< V_{N^d}(k_2))\\&=\int_{0\leq v_1< v_2}f_{V(k_1)}(v_1)f_{V(k_2)}(v_2)dv_1dv_2,
\end{split}
\end{equation}
where $f_{V(k_1)}, f_{V(k_2)}$ are defined in Corollary 1.
However, 
\begin{equation*}
\int_{0\leq v_1< v_2}f_{V(k_1)}(v_1)f_{V(k_2)}(v_2)dv_1dv_2,
\end{equation*}
just depends on $k_1$, $k_2$ and $n_1$, $n_2$.
\end{proof}
We should note in the proof of Theorem 1, we need the density function $f$ is $C^k$ ($k>2$). However, if $f$ is continuous, according the uniform approximate theorem, we can construct $C^k$ functions $\{f_i\}_{1=1}^{+\infty}$ to approximate $f$ and get the result what we want.  It is a common technique used in mathematical analysis.

\bibliographystyle{IEEEtran}
\bibliography{IEEEabrv,main,2019}

\begin{IEEEbiography}[{\includegraphics[width=1in,height=1.25in, clip,keepaspectratio]{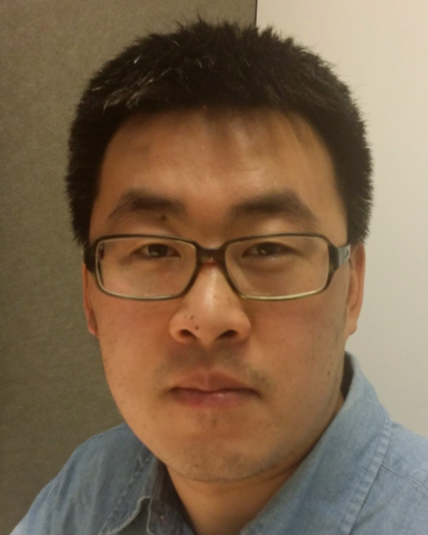}}]{Feng Gu} is currently working as a Software Senior Principal Engineer at Dell Technologies, Shanghai, China. He received his Ph.D. degree in computer science at the University of Technology Sydney in 2020. He received a Bachelor of Software Engineering at Zhejiang University, China, in 2012. His research interests include stream data mining, adaptive learning under concept drift and evolving data. He has published several papers in international journals and conferences. \vspace{-10 mm}
\end{IEEEbiography}

\begin{IEEEbiography}[{\includegraphics[width=1in,height=1.25in,clip,keepaspectratio]{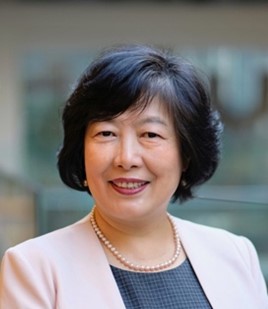}}]{Jie Lu} (F'18) is an Australian Laureate Fellow, IFSA Fellow, ACS Fellow, Distinguished Professor, and the Director of Australian Artificial Intelligence Institute (AAII) at the University of Technology Sydney, Australia. She received a PhD degree from Curtin University in 2000. Her main research expertise is in transfer learning, concept drift, fuzzy systems, decision support systems and recommender systems. She has published over 500 papers in IEEE Transactions and other leading journals and conferences. She is the recipient of two IEEE Transactions on Fuzzy Systems Outstanding Paper Awards (2019 and 2022), NeurIPS2022 Outstanding Paper Award, Australia's Most Innovative Engineer Award (2019), Australasian Artificial Intelligence Distinguished Research Contribution Award (2022), Australian NSW Premier's Prize on Excellence in Engineering or Information \& Communication Technology (2023), and the Officer of the Order of Australia (AO) 2023. \vspace{-10 mm}
\end{IEEEbiography}

\begin{IEEEbiography}[{\includegraphics[width=1in,height=1.25in, clip,keepaspectratio]{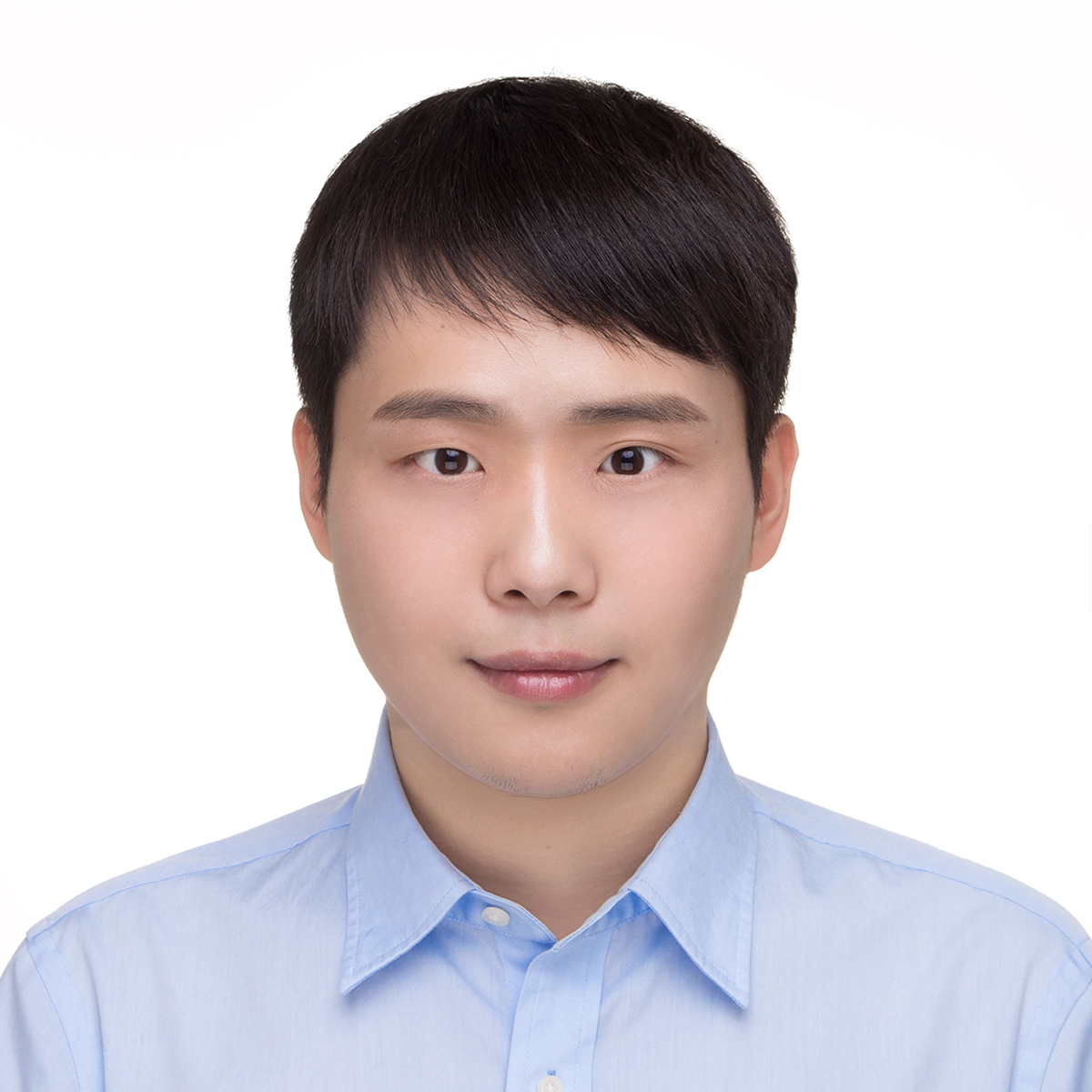}}]{Zhen Fang}
is a lecturer at Australian Artificial Intelligence Institute (AAII), University of Technology Sydney, Australia. He is a member of the Decision Systems and e-Service Intelligence (DeSI) Research Laboratory, Australian Artificial Intelligence Institute (AAII), University of Technology Sydney. His research interests include transfer learning and out-of-distribution learning. He has published several high quality papers on transfer learning and out-of-distribution learning in top conferences and journals, such as NeurIPS, ICML, ICLR, JMLR and TPAMI. He has received the NeurIPS2022 Outstanding Award and the 2023 Australasian AI Emerging Researcher Award.\vspace{-10 mm}
\end{IEEEbiography}

\begin{IEEEbiography}
[{\includegraphics[width=1in,height=1.25in, clip,keepaspectratio]{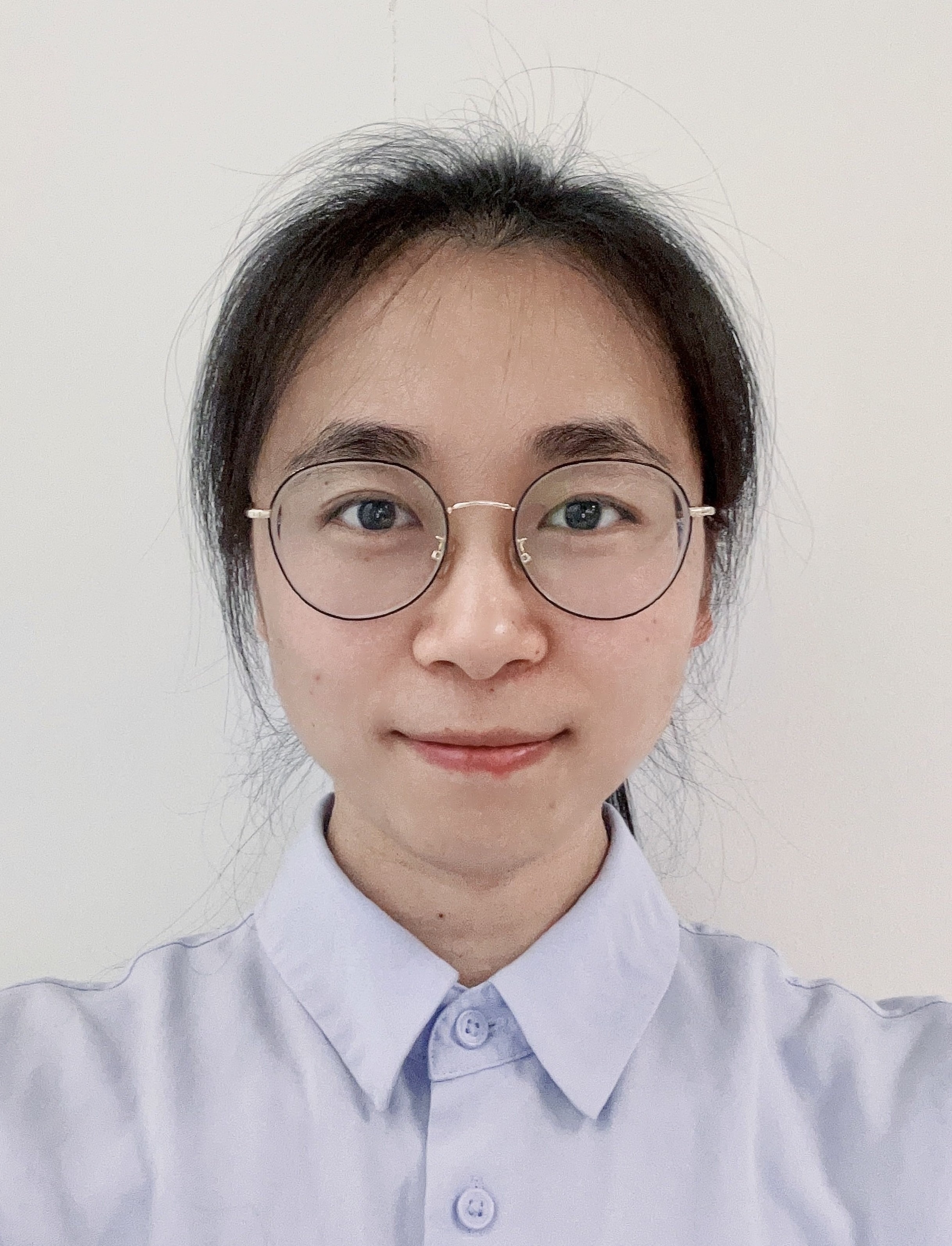}}]{Kun Wang} is a Postdoctoral Research Fellow at Australian Artificial Intelligence Institute (AAII), University of Technology Sydney. She received the Ph.D. degree in Computer Science from the University of Technology Sydney in 2024, and the Ph.D. degree in Management Science and Engineering from Shanghai University in 2023. Her research interests include concept drift adaptation, data stream mining and information management. She has published 10 papers in related areas.
\end{IEEEbiography}

\begin{IEEEbiography}[{\includegraphics[width=1in,height=1.25in, clip,keepaspectratio]{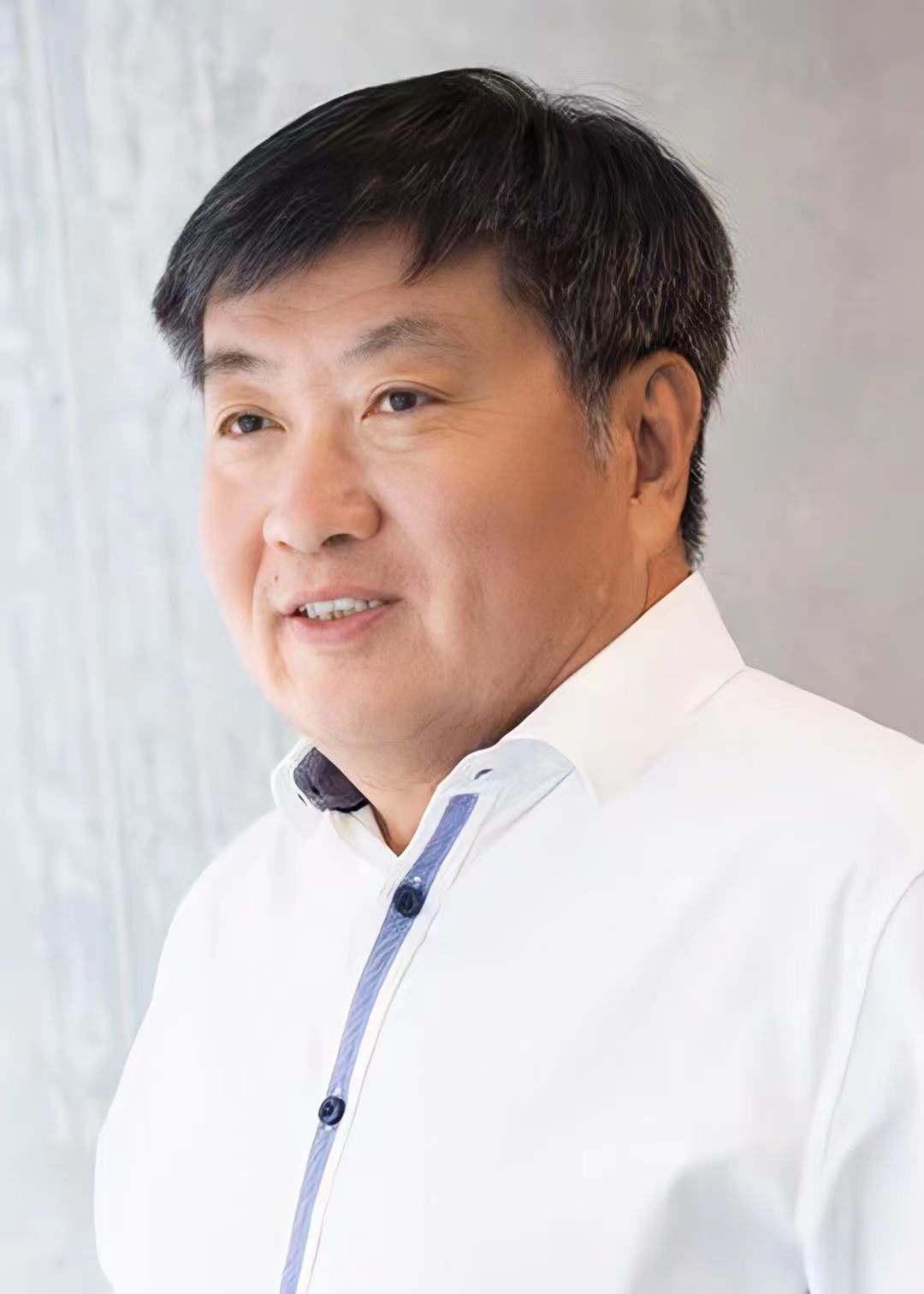}}]{Guangquan Zhang}
is an Australian Research Council (ARC) QEII Fellow, Associate Professor, and the Director of the Decision Systems and e-Service Intelligence (DeSI) Research Laboratory at the Australian Artificial Intelligence Institute, University of Technology Sydney, Australia. He received a Ph.D. in applied mathematics from Curtin University, Australia, in 2001. From 1993 to 1997, he was a full Professor in the Department of Mathematics, at Hebei University, China. His main research interests lie in fuzzy multi-objective, bilevel and group decision-making, fuzzy measures, transfer learning, and concept drift adaptation. He has published six authored monographs and over 500 papers including some 300 articles in leading international journals. He has supervised 40 Ph.D. students to completion and mentored 15 Postdoc fellows. Prof. Zhang has been awarded ten very competitive ARC Discovery grants and many other research projects. His research has been widely applied in industry.
 \vspace{-10 mm}
\end{IEEEbiography}

\end{document}


\title{A Neighbor-Searching Discrepancy-based Drift Detection Scheme for Learning Evolving Data}

\author{Feng~Gu,
       Jie~Lu\IEEEauthorrefmark{1},~\IEEEmembership{Fellow,~IEEE,}
        Zhen~Fang,
        Kun~Wang,
        and Guangquan~Zhang
        

\thanks{Feng Gu, Jie Lu, Zhen Fang, Kun Wang, and Guangquan Zhang are with Australia Artificial Intelligence Institute (AAII), Faculty of Engineering and Information Technology, University of Technology Sydney, P.O. Box 123, Broadway NSW, Australia.
(email: geofgu@gmail.com, jie.lu@uts.edu.au, zhen.fang@uts.edu.au, kun.wang@uts.edu.au, guangquzn.zhang@uts.edu.au)}
\thanks{\IEEEauthorrefmark{1} Corresponding author.}
}

\maketitle

\IEEEdisplaynontitleabstractindextext

\IEEEpeerreviewmaketitle

\section*{\centerline{Appendix A}}

\subsection*{Proof of Lemma 1}

\begin{proof}
Since the probability density function (PDF) is derived as
\begin{equation}
\mathbb{P}'(V(k)\le v)=\mathrm{binom}(k,n,\frac{\lambda v}{n})\cdot\frac{k}{v}\label{eq:VolumePDFbinom}.
\end{equation}
Notice that when $n\gg k$, $\underset{n\rightarrow\infty}{\lim}\frac{k}{n}=0$, thus $\underset{n\rightarrow\infty}{\lim}\frac{n!}{(n-k)!n^{k}}=1$ and $\underset{n\rightarrow\infty}{\lim}(1-\frac{\lambda v}{n})^{n-k}=e^{-\lambda v}$. 
This equation can be formulated as
\begin{align*}
f(v) & =\frac{n!}{(n-k)!n^{k}}\Big(1-\frac{\lambda v}{n}\Big)^{n-k}\cdot\frac{\lambda^{k}v^{k-1}}{(k-1)!}\\
 & =e^{-\lambda v}\cdot\frac{\lambda^{k}v^{k-1}}{(k-1)!},
\end{align*}
which is a Gamma distribution with shape parameter $k$ and scale parameter $\lambda$.
\end{proof}

\subsection*{Proof of Lemma 2}

\begin{proof}
Given two independent random variables $X\sim\mathcal{V}(k_{1},n,\lambda)$, $Y\sim\mathcal{V}(k_{2},n,\lambda)$, $k_{1}>0$, $k_{2}>0$, $\lambda>0$. If $n\gg k_{1}$ and $n\gg k_{2}$, according to Lemma 1, $X\sim\Gamma(k_{1},\lambda),Y\sim\Gamma(k_{2},\lambda)$. Let $U=X+Y$ and $V=X/(X+Y)$. The joint probability of $(X,Y)$ has PDF
\begin{align*}
f(x,y) & =\frac{\lambda^{k_{1}}}{\Gamma(k_{1})}x^{k_{1}-1}e^{-\lambda x}\frac{\lambda^{k_{2}}}{\Gamma(k_{2})}y^{k_{2}-1}e^{-\lambda y}\\
& =\frac{\lambda^{k_{1}+k_{2}}}{\Gamma(k_{1})\Gamma(k_{2})}x^{k_{1}-1}y^{k_{2}-1}e^{-\lambda(x+y)}.
\end{align*}
Since $x=uv$, $y=u(1-v)$, the absolute value of the Jacobian of $(U,V)$ and $(X,Y)$ is
\[
\left|\det\frac{\partial(x,y)}{\partial(u,v)}\right|=u,
\]
the PDF of $(U,V)$ is thus
\begin{align*}
g(u,v) & =\frac{\lambda^{k_{1}+k_{2}}}{\Gamma(k_{1})\Gamma(k_{2})}(uv)^{k_{1}-1}[u(1-v)]^{k_{2}-1}e^{-\lambda u}u\\
 & =\frac{\lambda^{k_{1}+k_{2}}}{\Gamma(k_{1})\Gamma(k_{2})}u^{k_{1}+k_{2}-1}e^{-\lambda u}v^{k_{1}-1}(1-v)^{k_{2}-1}\\
 & =\frac{\lambda^{k_{1}+k_{2}}}{\Gamma(k_{1}+k_{2})}u^{k_{1}+k_{2}-1}e^{-\lambda u}\frac{\Gamma(k_{1}+k_{2})}{\Gamma(k_{1})\Gamma(k_{2})}v^{k_{1}-1}(1-v)^{k_{2}-1}.
\end{align*}
According to the factorization theorem, $v=x/(x+y)=R_{(k_{1},k_{2})}$ has the PDF $\mathrm{Beta}(k_{1},k_{2})=\frac{\Gamma(k_{1}+k_{2})}{\Gamma(k_{1})\Gamma(k_{2})}v^{k_{1}-1}(1-v)^{k_{2}-1}$.
\end{proof}

\section*{\centerline{Appendix B}}
\subsection*{Proof of Theorem 1}
~~~~~~The basic idea for us to prove Theorem 1 is that we seek a good transformation $T$ to map distribution ${P}$ into a manifold space, where the new distribution $T_{\#}P$ becomes an uniform distribution. Then we compute $\mathbb{P}(R_{(k_{1},k_{2})}<0.5)$ in the new space. Finally, we prove the value of $\mathbb{P}(R_{(k_{1},k_{2})}<0.5)$ in the new space is same with the value of $\mathbb{P}(R_{(k_{1},k_{2})}<0.5)$ in $\mathbb{R}^d$.

Before proving our main result (Theorem 1), we need to extend the definitions of neighbour searching and $k$th nearest neighbor to the Riemannian manifold space $M^d$.

We use Riemannian manifold $M^d$ to replace $\mathbb{R}^d$.
There are two benefits. 1). $\mathbb{R}^d$ is just
a special case of manifold. Therefore, our definition about
nearest neighbor can handle more complicated situations.
2) The proving technique of our main theorem depends
on differential geometry theory. Using manifold to define
nearest neighbors will be helpful for us to prove our main
theorem.
\begin{defn}[Neighbor Searching]
Given a Riemannian manifold $M^d$, define a\emph{ (neighbor) search} ${S}$ over $M^d$ to be a sequence of measurable sets $\{S_{i}\}$, called \emph{search steps}, with the following properties:

1) $S_{0}=\{x^{1},x^{2},\ldots,x^{m}\}$, where $x^i \in M^d$, is the initial search step containing $m$ \emph{starting points};

2) $S_{i}\subset S_{j}$ if $i<j$, where $i,j\in\mathcal{I}$;

3) for any $i<j,i,j\in\mathcal{I}$, we have $V(S_{i})<V(S_{j})<V(M^d)$;

4) $\sup_{i\in \mathcal{I}} V(S_i))=V(M^d)$;

5) For any $v \in [0,V(M^d))$, there exists $i \in \mathcal{I}$ such that $V(S_i)=v$;

where $\mathcal{I}=[0,+\infty)$ is the index set.
\label{def:NeighborSearch}
\end{defn}

\begin{defn}[$k$th Nearest Neighbor]
  Given a Riemannian manifold $M^d$, continuous distribution $P$ neighbor searching $S$, and samples $X=\{x_1,...,x_n\}$ distributed by $P$, we say a set $S{(k)}\in S$ is the $k$th (nearest) neighbor if
  
  1) there exists an index $i\in \mathcal{I}$ such that $S{(k)}=S_i$;

  2) the number of samples in $S_i$ is $k$ ($\# X \bigcap S_i=k$);

  3) the number of samples in $S_j$ is small than k ($\# X \bigcap S_j<k$), where $j<i$ and $j\in \mathcal{I}$.
  
   The $k$th (nearest) neighbor-searching volume, denoted as $V(k) = V (S(k))$, is the volume of the $k$th nearest neighbor set $S(k)$.
\label{def:NearestNeighbor}
\end{defn}

\begin{defn}[Neighbor Search Restricted in Subset]
Given an open set $A \subset M^d$ and neighbor search $S=\{S_i\}_{i\in \mathcal{I}}$, a family of set $S_A$ is called neighbor search restricted in $A$, if $S_A=\{S_i\bigcap A\}_{i\in \mathcal{I}}$ .
\end{defn}

\begin{prop}\label{0.}
Given a manifold $M^d$, an open set  $A \subset M^d$ and  neighbor search $S=\{S_i\}_{i\in \mathcal{I}}$, then for neighbor search restricted in $A$: $S_A=\{S_i\bigcap A\}_{i\in \mathcal{I}}$,  any  $v \in [0,\sup_{i\in \mathcal{I}} V(S_i\bigcap A))$, there exists $i \in \mathcal{I}$ such that $V(S_i\bigcap A)=v$.
\end{prop}

\begin{proof}
1)  If $v=0$, then we choose $i=0$.

2)  If $0<v<\sup_{i\in \mathcal{I}} V(S_i\bigcap A)$, because $V(S_0\bigcap A)=0$ and $\lim_{i\rightarrow \infty}V(S_i\bigcap A)=\sup_{i\in \mathcal{I}} V(S_i\bigcap A)$, then there exist $\alpha$ and $\beta$ such that

\begin{equation}
V(S_{\alpha}\bigcap A)<v<V(S_{\beta}\bigcap A),
\end{equation}

Then we choose $\rho_0=\frac{\alpha+\beta}{2}$, if $V(S_{\rho_0}\bigcap A)\leq v$, we set $\alpha_0=\rho_0$; otherwise, $\beta_0=\rho_0$. We define $\rho_i=\frac{\alpha_i+\beta_i}{2}$, $i\in \mathbb{Z}_{\geq 0}$. If $V(S_{\rho_i}\bigcap A)\leq v$, we set $\alpha_{i}=\rho_i$; otherwise, $\beta_i=\rho_i$. Then we obtain  interval sequences
$\{[\alpha_i,\beta_i]\}_{i=0}^{\infty}$ such that

1) $\lim_{i\rightarrow \infty}\beta_i-\alpha_i=0$;

2) $\alpha_{i}\leq \alpha_{i+1}$ and $\beta_{i+1}\leq \beta_{i}$.
Then according to Closed Interval Theorem, there exists $t$ such that $\lim_{i\rightarrow \infty}\beta_i=\lim_{i\rightarrow \infty} \alpha_i=t$. 

Then we claim that $V(S_t \bigcap A)=v$. 
Because of properties 3) and 5) in Definition \ref{def:NeighborSearch}, we can proof that 

\begin{equation}\label{1.}
\lim_{i\rightarrow t} V(S_i)=V(S_t).
\end{equation}

Then

1)
\begin{equation*}
\lim_{i\rightarrow \infty}V(S_{\alpha_i}\bigcap A)\leq v \leq \lim_{i\rightarrow \infty}V(S_{\beta_i}\bigcap A);
\end{equation*}

2)
\begin{equation*}
\lim_{i\rightarrow \infty}V(S_{\alpha_i}\bigcap A)\leq V(S_{t}\bigcap A) \leq \lim_{i\rightarrow \infty}V(S_{\beta_i}\bigcap A);
\end{equation*}

3)
\begin{equation*}
V(S_{\beta_i}-S_{\alpha_i})\geq
V(S_{\beta_i}\bigcap A) - V(S_{\alpha_i}\bigcap A)\geq 0;
\end{equation*}

4) According to formula $(\ref{1.})$,
\begin{equation*}
\lim_{i\rightarrow \infty}V(S_{\beta_i}-S_{\alpha_i})=0.
\end{equation*}

1), 2), 3) and 4) imply that  $V(S_{t}\bigcap A)=v$.
\end{proof}

Given a continuous distribution in manifold $M^d$:
\begin{equation}
P(U)=\int_{M^d\bigcap U}f dV,
\end{equation}
where $U$ is any Borel set and $f$ is the continuos function in $M^d$.

\begin{defn}[$k$th Nearest Neighbor Restricted in Subset]
Given manifold $M^d$, continuous distribution $P$ neighbor search $S$, open set $A \subset M^d$ and samples $X=\{x_1,...,x_n\}$ distributed by $P$, we say a set $S_A{(k)}\in S_A$ is the $k$th (nearest) neighbor if

1) there exists an index $i\in \mathcal{I}$ such that $S_A{(k)}=S_i\bigcap A$;

2) the number of samples in $S_i\bigcap A$ is k ($\# X \bigcap (S_i\bigcap A)=k$);

3) the number of samples in $S_j$ is small than k ($\# X \bigcap ( S_j\bigcap A)<k$), where $j<i$ and $j\in \mathcal{I}$.

The $k$th (nearest) neighbor search volume, denoted as $V_A(k) = V (S_A(k))$, is the volume of the $k$th nearest neighbor set $S_A(k)$.
\end{defn}

Given samples $X_1=\{x^1_1,...,x^1_{n_1}\}$ and $X_2=\{x^1_1,...,x^1_{n_2}\}$ which are distributed by $P$. We denote $V({k_1})$ as the volume $k_1$th nearest neighbor  according to samples $X_1$ and $V({k_2})$ as the volume $k_2$th nearest neighbor  according to samples $X_2$. Then we prove that

\begin{prop}\label{4.}
\begin{equation*}
\mathbb{P}(V(k_1)< V(k_2))=\mathbb{P}(V_A(k_1)< V_A(k_2)),
\end{equation*}
where the set $A=\{x\in M^d: f(x)>0\}$.
\end{prop}

\begin{proof}
We only need to prove that
\begin{equation}
V(k_1)< V(k_2) \leftrightarrow  V_A(k_1)< V_A(k_2), a.e.;
\end{equation}
Because 
\begin{equation}
\begin{split}
&~~~~~~V(k_1)< V(k_2)\\& \leftrightarrow \# S(k_1)\bigcap X_2 <k_2\\&\leftrightarrow\# S(k_1)\bigcap (X_2\bigcap A) <k_2, a.e.\\& \leftrightarrow V_A(k_1)< V_A(k_2), a.e.;
\end{split}
\end{equation}
here we use $X_2\subset A, a.e.$
\end{proof}

Now we prove the main theorem of the paper. For proving Theorem 1, we need to introduce some lemmas.

\begin{lem}\label{2.}
If $M^d=\mathbb{R}^d$ and the density function $f$ is a $C^k$ functions in open set $A=\{x\in M^d: f(x)>0\}$ ($k>2$), then there exists a $C^k$ homeomorphism $T$ to make sure that

1) the image $T(A)$ is a $C^k$ manifold $N^d$;

2) $T_{\#}(P)(U)=\int_{N^d\bigcap U}1dV$, where $U$ is any Borel set in $N^d$.
\end{lem}

This lemma tells us that for any $C^k$ distribution ($k>2$), there exists a $``good"$ transformation $T$ to map the distribution into an uniform distribution in a new space $N^d$.

\begin{proof}
In the open set $A$, for any Borel set $U$,
\begin{equation}
P(U)=\int_{A\bigcap U} f dx;
\end{equation}
First, we assume there exists such map $T$ to make sure $T_{\#}(P)(T(U))=\int_{N^d\bigcap T(U)}1dV$,
then we try to find what will happen.

1)
\begin{equation}
P(U)=T_{\#}(P)(T(U)),
\end{equation}
which implies that

2)
\begin{equation}
\int_{A\bigcap U} f dx=\int_{N^d\bigcap T(U)}1dV
\end{equation}
We use area formula and obtain that
\begin{equation}
\int_{A\bigcap U} f dx=\int_{N^d\bigcap T(U)}1dV=\int_{A\bigcap U}\sqrt{G}dx,
\end{equation}
where $G$ is $\left|det[\frac{\partial T}{\partial x}]\right|^2$.
Hence, 
\begin{equation}
f^2=G.
\end{equation}
Therefor, for any homeomorphism $T$, if $T$ satisfies $f^2=G$, then the density function of the new distribution after distribution $T$ is 1.

We use the Nash embedding theorem \cite{nash1956imbedding} to help us find such transformation $T$ which satisfies $f^2=G$.

In $A$, we need to define a Riemannian metric $[g_{ij}]=f^{\frac{2}{d}}I_{d\times d}$, where $I_{d\times d}$ is the $d\times d$ identity matrix.
Then $(A,[g_{ij}])$ is an abstract $C^k$ Riemannian manifold. According to Nash embedding theorem, we know that there exists a $C^k$ homeomorphism $T$ embedding $A$ into a new space $\mathbb{R}^K$  ($K\leq d^2+5d+3$), then $T(A)$ is the new manifold $N^d$ we want to find.
\end{proof}

Until now, according to proposition \ref{0.} and lemma \ref{2.}, we can prove 

\begin{lem}
Given $C^k$ denstity f in $A\subset \mathbb{R}^d$ and $C^k$ homeomorphism $T$ ($T: A\rightarrow N^d$), we define a volume map 
\begin{equation}
T_f: [0,\sup_{i\in \mathcal{I}} \mathcal{L}^d(S_i\bigcap A))\rightarrow [0,\sup_{i\in \mathcal{I}} V(T(S_i\bigcap A))),
\end{equation}
for any $v\in [0,\sup_{i\in \mathcal{I}} \mathcal{L}^d(S_i\bigcap A))$, we choose $S_k\in S$ such that $\mathcal{L}^d(S_k\bigcap A)=v$, and set $T_f(v)=V(T(S_k\bigcap A)))$, then the map $T_f$ is well defined, monotone bijective map.
\end{lem}

Now we can use lemma \ref{2.} to help us compute $\mathbb{P}(V_A(k)>v)$.

\begin{thm}\label{6.}
Given $C^k$ denstity f in $A\subset \mathbb{R}^d$ and $C^k$ homeomorphism $T$ ($T: A\rightarrow N^d$),
\begin{equation}
\mathbb{P}(V_A(k)>v)=\sum_{i=0}^{k-1} \binom{n}{i}\Big(T_f(v)\Big)^{i}\Big(1-T_f(v)\Big)^{n-i},
\end{equation}
where $v\in  [0,\sup_{i\in \mathcal{I}} \mathcal{L}^d(S_i\bigcap A))$.
\end{thm}

\begin{proof}
According to lemma \ref{2.}, there exists a neighbor set $S_i$ such that $\mathcal{L}^d(S_i\bigcap A)=v$, then
\begin{equation}
\begin{split}
\mathbb{P}(V_A(k)>v)&=\mathbb{P}(\# (S_i\bigcap A) \bigcap X\leq k-1)\\&=\mathbb{P}(\# T(S_i\bigcap A) \bigcap T(X)\leq k-1)
\\&=\sum_{i=0}^{k-1}\mathbb{P}(\# T(S_i\bigcap A) \bigcap T(X)=i)\\&=\sum_{i=0}^{k-1} \binom{n}{i}\Big(p\Big)^{i}\Big(1-p\Big)^{n-i},
\end{split}
\end{equation}
where $p=P(S_i\bigcap A)=\int_{N^d\bigcap T(S_i)}dV=V_{N^d}(T(S_i))=T_f(v)$.
Then
\begin{equation}
\mathbb{P}(V_A(k)>v)=\sum_{i=0}^{k-1} \binom{n}{i}\Big(T_f(v)\Big)^{i}\Big(1-T_f(v)\Big)^{n-i}
\end{equation}
\end{proof}

\begin{cor}\label{5.}
Under the same assumption of Theorem \ref{6.}, let $V_{N^d}(k)$ is the  volume in $N^d$ of set $T(S(k)\bigcap A)$, where $S_{k}$ is the $k$th nearest neighbour corresponding to samples $X$, then we have
\begin{equation}
\mathbb{P}(V_{N^d}(k)>v)=\sum_{i=0}^{k-1} \binom{n}{i}\Big( v\Big)^{i}\Big(1- v\Big)^{n-i}
\end{equation}
where $v\in [0,1)$.
Moreover, $V_{N^d}(k)'s$ density function of   $f_{V(k)}(v)$ is
\begin{equation}
\begin{split}
-\sum_{i=1}^{k-1}\binom{n}{i}\Big[iv^{i-1}(1- v)^{n-i}+v^i(n-i)(1- v)^{n-i-1}(-1)\Big]
\end{split}
\end{equation}
when $v\in [0,1)$;
$f_{V(k)}(v)=0$, when $v\geq 1$.
\end{cor}

Now we prove Theorem 1.
\begin{proof}[Proof for Theorem 1]
According to Lemma \ref{2.}, there exists a homeomorphism $T$ such that

1) the image $T(A)$ is a $C^k$ manifold $N^d$;

2) $T_{\#}(P)(U)=\int_{N^d\bigcap U}1dV$, where $U$ is any Borel set in $N^d$.

\textbf{Claim 1}
\begin{equation}
\begin{split}
&~~~~~\mathbb{P}(V(k_1)< V(k_2))\\&=\mathbb{P}(V_A(k_1)< V_A(k_2))\\&=\mathbb{P}(V_{N^d}(k_1)< V_{N^d}(k_2)),
\end{split}
\end{equation}
where $V_{N^d}(k_i)$ is the volume in $N^d$ of set $T(S(k_i)\bigcap A)$, here $S_{k_i}$ is the $k_i$th nearest neighbour corresponding to samples $X_i$.

We prove the Claim.
Firstly according to proposition \ref{4.}, $\mathbb{P}(V(k_1)< V(k_2))=\mathbb{P}(V_A(k_1)< V_A(k_2))$. So we only need to prove $\mathbb{P}(V_A(k_1)< V_A(k_2))=\mathbb{P}(V_{N^d}(k_1)< V_{N^d}(k_2))$

It is clear that
\begin{equation}
\begin{split}
&~~~~~~V_A(k_1)< V_A(k_2)\\&\leftrightarrow\# S(k_1)\bigcap (X_2\bigcap A) <k_2, \\& \leftrightarrow \# T(S(k_1)\bigcap A) \bigcap T(X_2) <k_2, \\&\leftrightarrow V_{N^d}(k_1)< V_{N^d}(k_2);
\end{split}
\end{equation}
Therefore, $\mathbb{P}(V_A(k_1)< V_A(k_2))=\mathbb{P}(V_{N^d}(k_1)< V_{N^d}(k_2))$.

So we just need to compute $\mathbb{P}(V_{N^d}(k_1)< V_{N^d}(k_2))$.

Let the density of $V_{N^d}(k)$ be $f_{V(k)}(v)$, where $v\geq 0$, then
\begin{equation}
\begin{split}
&~~~~~\mathbb{P}(V_{N^d}(k_1)< V_{N^d}(k_2))\\&=\int_{0\leq v_1< v_2}f_{V(k_1)}(v_1)f_{V(k_2)}(v_2)dv_1dv_2,
\end{split}
\end{equation}
where $f_{V(k_1)}, f_{V(k_2)}$ are defined in Corollary 1.
However, 
\begin{equation*}
\int_{0\leq v_1< v_2}f_{V(k_1)}(v_1)f_{V(k_2)}(v_2)dv_1dv_2,
\end{equation*}
just depends on $k_1$, $k_2$ and $n_1$, $n_2$.
\end{proof}
We should note in the proof of Theorem 1, we need the density function $f$ is $C^k$ ($k>2$). However, if $f$ is continuous, according the uniform approximate theorem, we can construct $C^k$ functions $\{f_i\}_{1=1}^{+\infty}$ to approximate $f$ and get the result what we want.  It is a common technique used in mathematical analysis.

\bibliographystyle{IEEEtran}
\bibliography{IEEEabrv,main}